
\documentclass{article}

\usepackage{microtype}
\usepackage{graphicx}
\usepackage{subfigure}
\usepackage{booktabs} 

\usepackage{hyperref}



\usepackage[accepted]{icml2024}

\usepackage{amsmath}
\usepackage{amssymb}
\usepackage{mathtools}
\usepackage{amsthm}

\usepackage[capitalize,noabbrev]{cleveref}


\theoremstyle{plain}
\newtheorem{theorem}{Theorem}[section]
\newtheorem{proposition}[theorem]{Proposition}
\newtheorem{lemma}[theorem]{Lemma}

\theoremstyle{definition}
\newtheorem{definition}[theorem]{Definition}

\theoremstyle{remark}

\def \bzero{\mathbf{0}}
\def \trans{^\mathrm{T}}
\def \relu{\operatorname{Relu}}
\def \srs{\operatorname{srs}}
\def \cls{\operatorname{cls}}


\usepackage{amsmath,amsfonts,bm}









\def\eqref#1{equation~\ref{#1}}









\def\1{\bm{1}}




\def\rvb{{\mathbf{b}}}

\def\rvf{{\mathbf{f}}}

\def\rvh{{\mathbf{h}}}
\def\rvu{{\mathbf{i}}}

\def\rvm{{\mathbf{m}}}

\def\rvp{{\mathbf{p}}}

\def\rvu{{\mathbf{u}}}
\def\rvv{{\mathbf{v}}}
\def\rvw{{\mathbf{w}}}
\def\rvx{{\mathbf{x}}}
\def\rvy{{\mathbf{y}}}



\def\rmH{{\mathbf{H}}}
\def\rmI{{\mathbf{I}}}

\def\rmM{{\mathbf{M}}}

\def\rmW{{\mathbf{W}}}





\DeclareMathAlphabet{\mathsfit}{\encodingdefault}{\sfdefault}{m}{sl}
\SetMathAlphabet{\mathsfit}{bold}{\encodingdefault}{\sfdefault}{bx}{n}




\def\sH{{\mathbb{H}}}

\def\sP{{\mathbb{P}}}









\newcommand{\rnd}{\text{rnd}}
\newcommand{\fx}{\text{fix}}

\usepackage{enumerate}
\usepackage[textsize=tiny]{todonotes}

\icmltitlerunning{Breaking through the Learning Plateaus of In-context Learning in Transformer}

\begin{document}

\twocolumn[
\icmltitle{Breaking through the Learning Plateaus of In-context Learning in Transformer}



\begin{icmlauthorlist}
\icmlauthor{Jingwen Fu}{xjtu}
\icmlauthor{Tao Yang}{xjtu}
\icmlauthor{Yuwang Wang}{tsinghua}
\icmlauthor{Yan Lu}{msra}
\icmlauthor{Nanning Zheng}{xjtu}
\icmlcorrespondingauthor{Nanning Zheng}{nnzheng@mail.xjtu.edu.cn}

\end{icmlauthorlist}

\icmlaffiliation{xjtu}{National Key Laboratory of Human-Machine Hybrid Augmented Intelligence, National Engineering Research Center for Visual Information and Applications, and Institute of Artificial Intelligence and Robotics, Xi'an Jiaotong University}
\icmlaffiliation{tsinghua}{Tsinghua University}
\icmlaffiliation{msra}{Microsoft Research Asia}
\icmlkeywords{Machine Learning, ICML}

\vskip 0.3in
]



\printAffiliationsAndNotice{} 
\begin{abstract}

In-context learning, i.e., learning from context examples, is an impressive ability of Transformer. Training Transformers to possess this in-context learning skill is computationally intensive due to the occurrence of \textit{learning plateaus}, which are periods within the training process where there is minimal or no enhancement in the model's in-context learning capability. To study the mechanism behind the learning plateaus, we conceptually separate a component within the model's internal representation that is exclusively affected by the model's weights. We call this the “weights component”, and the remainder is identified as the “context component”. By conducting meticulous and controlled experiments on synthetic tasks, we note that the persistence of learning plateaus correlates with compromised functionality of the weights component. Recognizing the impaired performance of the weights component as a fundamental behavior that drives learning plateaus, we have developed three strategies to expedite the learning of Transformers. The effectiveness of these strategies is further confirmed in natural language processing tasks. In conclusion, our research demonstrates the feasibility of cultivating a powerful in-context learning ability within AI systems in an eco-friendly manner.

\end{abstract}
%

\section{Introduction}

\begin{figure*}
    \centering
    \includegraphics[width=0.95 \textwidth]{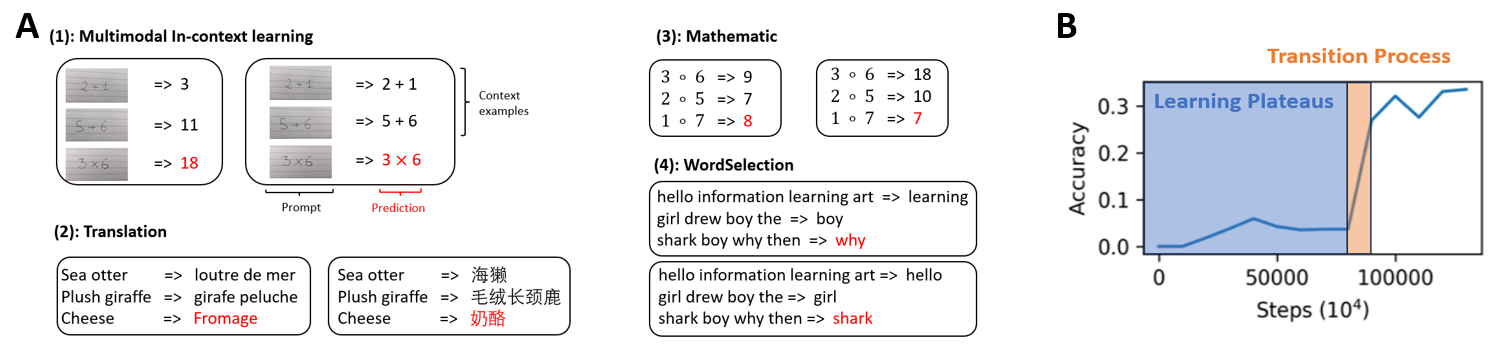}
    \caption{ \textbf{A: Examples of the in-context learning tasks.} Examples of (1) comes from \citet{alayrac2022flamingo}, Examples of (2),(3),(4) come from \citet{brown2020language}. \textbf{B: Illustration of learning plateaus and transition pattern.} We evaluate the in-context learning ability of Pythia 13B model~\citep{biderman2023pythia} trained on pile dataset~\citep{gao2020pile} using WordSelection task (Detail in Appendix~\ref{sec:nlp_app}) during the training process.  } 
    \label{fig:examples}
\end{figure*}

This paper is centered on the in-context learning ability of Transformer, which stands as one of the most significant abilities for current applications of Transformer models~\citep{brown2020language,dong2022survey,shin2022effect,min2021metaicl}. Fig.~\ref{fig:examples}A gives examples of the in-context learning tasks. The in-context learning ability has a confusing property that it is emergent when increasing training FLOPs~\citep{wei2022emergent}. By exploring the learning dynamic of in-context learning, previous works ~\citep{edelman2022inductive,michaud2023quantization,kirsch2022general,singh2023transient,reddy2023mechanistic} discover that the Transformer learns the in-context learning ability abruptly. We summarize the phenomenon during the learning process as plateaus and transition pattern, illustrated in Fig.~\ref{fig:examples}B. This pattern indicates that there is negligible or no enhancement in the in-context learning ability during the initial learning period, which we call the learning plateaus, and these plateaus are then followed by rapid and substantial gains, which are referred to as the transition process. Often, the strategy to shorten learning plateaus involves expanding the model's scale, which consequently demands greater computational resources and energy consumption. \textit{In this paper, our objective is to investigate the possibility of overcoming the learning plateaus without scaling the model's size.} Addressing this challenge could lead to a novel approach to creating environmentally sustainable intelligent systems. Since learning plateaus are not typically observed in conventional supervised learning, concentrating on the difference may be crucial for unraveling the mechanisms behind the learning plateaus.

\textbf{Weights and context components}  \quad
A principal distinction between in-context learning and traditional supervised learning lies in the fact that in-context learning outcomes are shaped by both the model's parameters and the specific context examples provided. 
In traditional supervised learning, for a given input sample $\rvx_p$ with its corresponding label $y_p$, the goal is to find a parameterized function $f_{\rvw}$ with weights $\rvw$, such that the prediction $f_{\rvw}(\rvx_p)$ is equal to the label $y_p$. In this scenario, the weights $\rvw$ hold all the information needed to perform the task at hand. Conversely, within the in-context learning paradigm, there is an additional source of information, which is the context examples $s_c$. Hence, the prediction model is represented as $f_{\rvw,s_c}(\rvx_p)$, indicating that both the weights and the context examples have the potential to affect the prediction outcome. To examine how the weights and in-context examples impact the prediction, we assume that the function $f_{\rvw,s_c}(x)$ can be \textbf{conceptual decomposite} into $f_{\rvw,s_c}(x_p)=g_{comb}(g_{weights}(\rvx_p),g_{context}(s_c))$. This decomposition allows us to separate the component $g_{weights}(\cdot)$, which is part of the Transformer solely dependent on the weights, from the component $g_{context}(\cdot)$, which is influenced by both the weights and the context examples. We refer to $g_{weights}(\cdot)$ as the weights component and $g_{context}(\cdot)$ as the context component. Owing to the design of the Transformer's architecture, there is an interaction between $g_{weights}(\cdot)$ and $g_{context}(\cdot)$. \textbf{Importantly}, conceptual decomposition implies that the decomposition isn't physical; rather, it's solely for analytical purposes.


\textbf{Key observations} \quad Owing to the practical challenges of directly investigating the components, as well as the inability to regulate the complexity of these tasks, we depend on a synthetic task to conduct controlled experiments. Through conducting experiments involving tasks of varying difficulty and monitoring the performance of both the weights and the context components, we have made a critical observation: \textit{the duration of learning plateaus are often associated with a dysfunction of the weights component.}

\textbf{Break through the learning plateaus.} \quad 
Drawing from our observations, we consider the dysfunction of the weights component during learning to be the primary cause of the extended learning plateaus.   Based on this, we suggest three methods for enhancing the weights component and all these methods can mitigate the learning plateaus. The effectiveness of these methods is also verified in NLP tasks.

\textbf{Contributions} \quad
Our main contributions can be summarized as follows: \textbf{a)} We give formulations of weights and context components with a new synthetic task that enables the study of the mechanism behind learning plateaus. \textbf{b)} We study the learning process of synthetic tasks with different complexity. The experiments reveal the relation between the weights component and the learning plateaus. \textbf{c)} To further verify the causal relation between the learning plateaus and the weights component,  we propose different methods to improve the weights component and we observe the mitigating of the learning plateaus. The discoveries are verified in the NLP task.
\begin{figure*}
    \centering
    \includegraphics[width=0.65 \textwidth]{./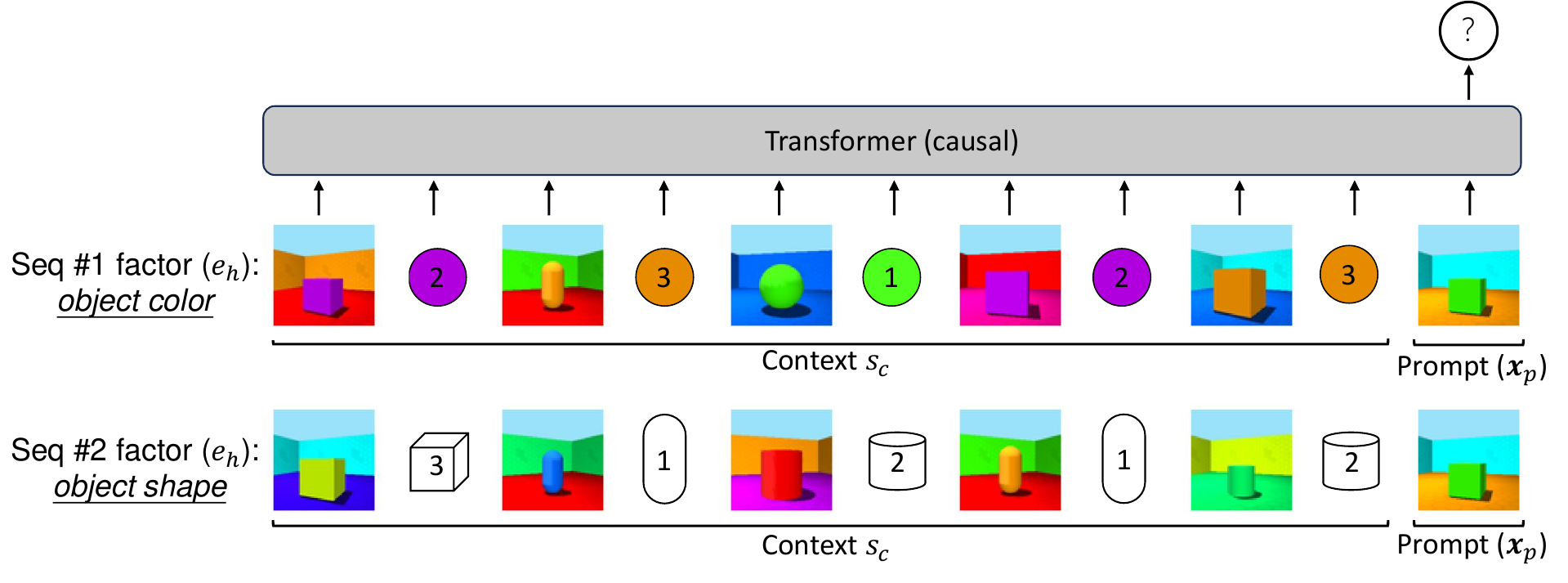}
    \caption{\textbf{Synthetic task}. In the task, Transformer is required to predict the label of $\rvx_p$ given context examples $s_c$. The images from the 3D Shapes dataset are synthesized based on six factors. The output factor is determined by the context. In this case, we provide two sequences of factors: "object color" and "object shape," respectively.}    
    \label{fig:data_vis}
\end{figure*}
\section{Related works}

In this section, we explore the works most closely related to our study. Additional related literature can be found in Appendix \ref{sec:other_related_works}. This appendix encompasses a) an examination of the weights and context components in relation to the previously proposed division of in-weights and in-context learning, b) a review of evidence from prior studies that supports the importance of both context and weights components in practical applications, and c) a compilation of related works aimed in understanding Transformers.

\textbf{Analyzing the transition phenomenon of in-context learning} \quad The emergence of in-context learning capabilities during learning has captivated numerous researchers~\citep{olsson2022context,michaud2023quantization,kirsch2022general,singh2023transient,reddy2023mechanistic}. \citet{olsson2022context} has found that this transition process is linked to the creation of inductive heads within Transformer models. Meanwhile, other researchers have determined that the occurrence of such a transition is influenced by the properties of the dataset~\citep{chan2022data}. \citet{li2022emergent,lu2023emergent} go further to associate the emergent capabilities seen during the learning with those that appear during the scaling of the model. Our research, however, is not centered on this transition phenomenon. We aim to delve into the learning plateaus phenomenon, which serves as a vital complement to understanding the intricacies of the transition phenomenon.

\textbf{Understanding of the mechanism of in-context learning} \quad Considerable research has been devoted to this vital topic, with most prior studies focusing on understanding the mechanism of in-context learning from the perspective of algorithm implementation. For example, a number of recent papers \citep{von2022transformers,dai2023can,akyurek2022learning} have described in-context learning as akin to performing gradient descent. Additional research \citep{li2023transformers,bai2023transformers} has interpreted in-context learning in terms of algorithm implementation and choice. Our study, however, takes a novel approach by examining the in-context learning mechanism through the lens of the weights and context components, offering a distinct perspective from earlier works.

\section{Experimental Design}
\label{subsec:dataset_construction}

This paper employs a synthetic task to investigate the fundamental mechanisms behind learning plateaus. The use of a synthetic task is due to the fact that the intricacies of real tasks pose challenges in monitoring and comprehending the precise factors that influence the emergence of learning plateaus.

\subsection{Dataset Construction}

We propose a task using the Shapes3D~\citep{kim2018disentangling} dataset for a more controllable study. The experimental setting is shown in Fig.~\ref{fig:data_vis}. Specifically, given a sequence of image and label pairs as context, the task involves predicting the label of the prompt image. Each image contains six different factors: object color, object shape, object scale, background color, floor color, and pose. We denote the factor as $e$ and the factor value of factor $e$ as $v^{(e)}$. For each sequence, we randomly choose a factor to generate the labels of the images, referring to this factor as the \textbf{hidden factor} $e_h$ for this sequence. For the two context sequences in Fig.~\ref{fig:data_vis}, the hidden factor of Seq \#1 is object color, and the correct label for the prompt image is 1 (object color is green). In Seq \#2, for the same prompt image, the correct label is 3 (the object shape is a cube).

We give a formal formulation of the data generation process below.

\textbf{Notations} \quad We denote $\rvx_p$ as the prompt example with ground truth label $y_p$. The context examples are $s_c=\lbrace (\rvx_1,y_1), \cdots, (\rvx_l,y_l) \rbrace$. The prediction of the model is denoted as $f_{\rvw,s_c}(\rvx_p)$. We denote the factor values of $\rvx$ as $v_\rvx$ and the corresponding factor value for factor $e$ as $v_\rvx^{(e)}$. $v_{p}$ is short for $v_{\rvx_p}$. The hidden factor is denoted as $e_h$. We denote the mapping function as $m$, which maps the factor value to the corresponding label, i.e. $y_p=m(v_p^{(e_h)})$. We denote the probability as $\sP$.

\begin{definition}
    \label{def:data_gen}
    The data is generated according to the equation that 
    \begin{equation}
    \sP(\rvx_p,y_p,s_c)=\sum_{m,e_h}\sP(m)\sP(e_h)\sP(\rvx_p,y,s_c|m,e_h), 
    \end{equation}
    where $\sP(m),\sP(e_h)$ are manually setted distributions and $\sP(\rvx,y,s_c|m,e_h)$ is a fixed distribution.  $\sP(m),\sP(e_h)$ are uniform distributions over all possible values by default. In this paper, we rely on changing $\sP(m)$ to obtain the tasks with different complexity.
\end{definition}

\subsection{Analysis}

To successfully tackle the in-context learning task, the network is required to discern the values of the six factors present in the prompt image, which relate to the weights component, as well as accurately determine the appropriate hidden factor for output based on the given contexts, which pertains to the context component.
We break down the distribution $\sP(y_p|\rvx_p,s_c)$ as follows.

\begin{proposition}
\label{prop:decomposite}
The probability of $\sP(y_p|\rvx_p,s_c)$ can be decomposite as:
\begin{equation}
\begin{aligned}
    &\sP(y_p|\rvx_p,s_c)=
    \\ & \sum_{v_p,m,e_h}\underbrace{\sP(y_p|v_p,m,e_h)}_{\text{Properties of Task}}\underbrace{\sP(v_p|\rvx_p)}_{\text{weights}}\underbrace{\sP(e_h|s_c,m)\sP(m|s_c)}_{\text{context}},
\end{aligned}
\end{equation}
where  $\sP(v_p|\rvx_p)$ is weights related information, and $\sP(e_h|s_c,m)\sP(m|s_c)$ is context related information. $\sP(y_p|v_p,m,e_h)$ is related for the properties of task, and we have $\sP(y_p|v_p,m,e_h)=1$ if $m(v_p^{e_h})=y_p$ else $\sP(y_p|v_p,m,e_h)=0$.
\end{proposition}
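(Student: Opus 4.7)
The plan is to derive the decomposition by marginalizing over the latent variables $v_p$, $m$, and $e_h$, and then collapsing each resulting conditional using the generative structure of Definition~\ref{def:data_gen}. First I would apply the law of total probability to write
\begin{equation*}
\sP(y_p|\rvx_p,s_c) = \sum_{v_p,m,e_h}\sP(y_p,v_p,m,e_h|\rvx_p,s_c),
\end{equation*}
and then expand each summand by the chain rule as $\sP(y_p|v_p,m,e_h,\rvx_p,s_c)\,\sP(v_p|m,e_h,\rvx_p,s_c)\,\sP(m,e_h|\rvx_p,s_c)$.

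Next I would simplify each of the three conditionals. Because $y_p = m(v_p^{(e_h)})$ is a deterministic function of $(v_p,m,e_h)$, the first factor collapses to $\sP(y_p|v_p,m,e_h)$, which coincides with the indicator $\mathbf{1}[m(v_p^{(e_h)})=y_p]$ stated in the proposition. Because the factor values $v_p$ are properties of the image $\rvx_p$ alone, the second factor reduces to $\sP(v_p|\rvx_p)$. For the third factor, each sequence first draws $(m,e_h)$ once from $\sP(m)\sP(e_h)$ and then generates the prompt and the context pairs from the same per-sequence distribution, so the prompt $\rvx_p$ carries no information about $(m,e_h)$ beyond what $s_c$ already provides, giving $\sP(m,e_h|\rvx_p,s_c)=\sP(m,e_h|s_c)$. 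A final chain-rule step factors this as $\sP(e_h|s_c,m)\,\sP(m|s_c)$, and assembling the three reductions yields the claim.

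The principal obstacle is making the third reduction rigorous from Definition~\ref{def:data_gen} alone, since that definition only posits a fixed joint $\sP(\rvx_p,y_p,s_c|m,e_h)$ without explicitly asserting the conditional independence $\rvx_p \perp s_c \mid (m,e_h)$ or the marginal independence $\rvx_p \perp (m,e_h)$. To close this gap I would state, as additional modelling assumptions inherited from the task description, that the prompt image is rendered independently of the hidden factor and that the prompt and context pairs are drawn i.i.d.\ from the per-sequence conditional. A short Bayes-rule calculation then yields $(m,e_h) \perp \rvx_p \mid s_c$, which is exactly what the third step needs; the rest of the argument is a mechanical assembly of the three simplifications.
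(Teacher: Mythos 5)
Your proposal is correct and follows essentially the same route as the paper's own proof: law of total probability over $(v_p,m,e_h)$, chain rule, and then collapsing the three conditionals to $\sP(y_p|v_p,m,e_h)$, $\sP(v_p|\rvx_p)$, and $\sP(e_h|s_c,m)\sP(m|s_c)$. The paper only explicitly justifies the first of these collapses and silently asserts the other two, so your explicit identification of the conditional-independence assumptions needed for the third step is, if anything, more careful than the original.
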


Ideally, if we can approximate the distribution $\sP(v_p|\rvx_p)$ and the distribution $\sP(e_h|s_c,m)\sP(m|s_c)$, we can obtain the distribution $\sP(y_p|\rvx_p,s_c)$. Recall that we denote $g_{weights}(\rvx_p)$ as weights component and $g_{context}(s_c)$ as context component. Based on the decomposited results of $\sP(y_p|\rvx_p,s_c)$, we define our expectation for the components to be good:

\begin{definition}
    \label{def:good}
    If $f(\cdot)$ has a good weights component in its representation, for any $\rvx$, we can infer  $\sP(v_{\rvx}|\rvx)$ from $g_{weights}(\rvx)$, and if it has good context component, we can infer $\sP(e_h|s_c,m)$ from $g_{context}(s_c)$.
\end{definition}

\subsection{Evaluation framework}
In practice, since we cannot find the specific form of $g_{weights}$ and $g_{context}$, such that $f_{\rvw,s_c}(x_p)=g_{comb}(g_{weights}(\rvx_p),g_{context}(s_c))$. We leverage the probe method to measure the goodness of the components in the inner representation of the Transformer as defined in Definition \ref{def:good}. We choose the layers in the Transformer that can produce the best probe results. We give the details of the probing framework in the Appendix \ref{app:probe_model}.

\textbf{Probing methods and metrics} \quad We use three metrics here. \textbf{weights comp.  score}: accuracy of the probe model to predict $v_{\rvx}$ given $\rvx$, where ``comp." is short for component. \textbf{context comp. score}: the accuracy of the probe model to predict $e_h$ given $s_c,\rvx_p$. \textbf{Accuracy}: We evaluate the prediction accuracy of the prompt example as the metric to evaluate the in-context learning performance. We give 39 context examples when evaluate context comp. score and accuracy. 

All the results given in this paper are evaluated in the test set by default, as the performance of the test set is more aligned with the model's performance on real situations.  

\subsection{Model, training and dataset detail}

To simulate the auto-regression framework, we calculate the loss for the sequence $s=\{(\rvx_1,y_1),\ldots,(\rvx_L,y_L)\}$ as:
\begin{equation}
\label{eq:seq_loss}
    \mathcal{L}(\theta,s)=\frac{1}{L}\sum_{i=1}^L \boldsymbol{l}( f_{\rvw,s^{(i-1)}}(\rvx_i),y_i ),
\end{equation}
where $s^{(j)}\triangleq \lbrace (\rvx_1,y_1),\cdots, (\rvx_j,y_j) \rbrace$, $\boldsymbol{l}$ denotes the loss function. $\rvx$ will be tokenized by VAE~\citep{kingma2013auto} before being passed to Transformer. Note that the weights of VAE are pretrained and fixed during the whole experiments. The training loss in the dataset $S$, which contains $n$ sequence, is calculated as the average of loss over all training sequences, i.e.,
\begin{equation}
    \mathcal{L}(\theta,S)=\frac{1}{n}\sum_{s\in S} \mathcal{L}(\theta,s).
\end{equation}
We leave the details of the model, dataset, training design and configuration in the Appendix \ref{app:syn}.

\section{Learning plateaus and weights component}
Recall that we define learning plateaus as periods during the learning process in which the model experiences minimal or no improvement in performance on \textbf{test data}. Conversely, the transition process is characterized by a time span in which the model's performance rapidly enhances. Typically, a learning plateau precedes a transition process. When a learning process includes several instances of learning plateaus and transition processes, the terms “learning plateaus” and “transition process” refer by default to the first occurrence of each. 
\subsection{Controlling of complexity}
We aim to understand when the learning plateaus will happen by controlling the complexity of the tasks. Therefore, we will present the method for controlling the complexity in this part.
Recall from Definition \ref{def:data_gen} that the data is generated following the formula $\sP(\rvx,y,s_c)=\sum_{m,e_h}\sP(m)\sP(e_h)\sP(\rvx,y,s_c|m,e_h)$. By altering the probabilities $\sP(m)$, we can manipulate the resulting dataset.
We first give two baseline configurations:
\begin{itemize}
\item $D_{\fx}$: There exsits a $m_0$ such that $\sP(m_0)=1$ and for all $m\neq m_0$, we have $\sP(m)=0$.
\item $D_{\rnd}$: $\sP(m)$ is a uniform distribution over all possible values.
\end{itemize}
Clearly, introducing greater randomness in the selection of $m$ increases the complexity of the problem. We quantify this complexity using entropy, defined as $\sH \triangleq \sum_{m} -\sP(m) \log \sP(m)$. It is noted that $\sH(D_{\fx})=0$. Since our primary objective is to adjust the complexity of the in-context learning task rather than to examine the effects of various distributions of $\sP(m)$, we simplify $\sP(m)$ to a uniform distribution. The entropy is then managed by varying the size of the support set, that is, the number of possible mapping functions.

When considering two distinct data configurations $D_1$ and $D_2$, the notation $D_1 \Rightarrow D_2$ denotes the performance evaluation of a model on data setting $D_2$ after it has been trained on data setting $D_1$. Unless specified otherwise, we assume that the model is both trained and tested on the same data configuration.

\subsection{Learning plateaus of task with different complexity}

\begin{figure}
    \centering
    \includegraphics[width=0.48 \textwidth]{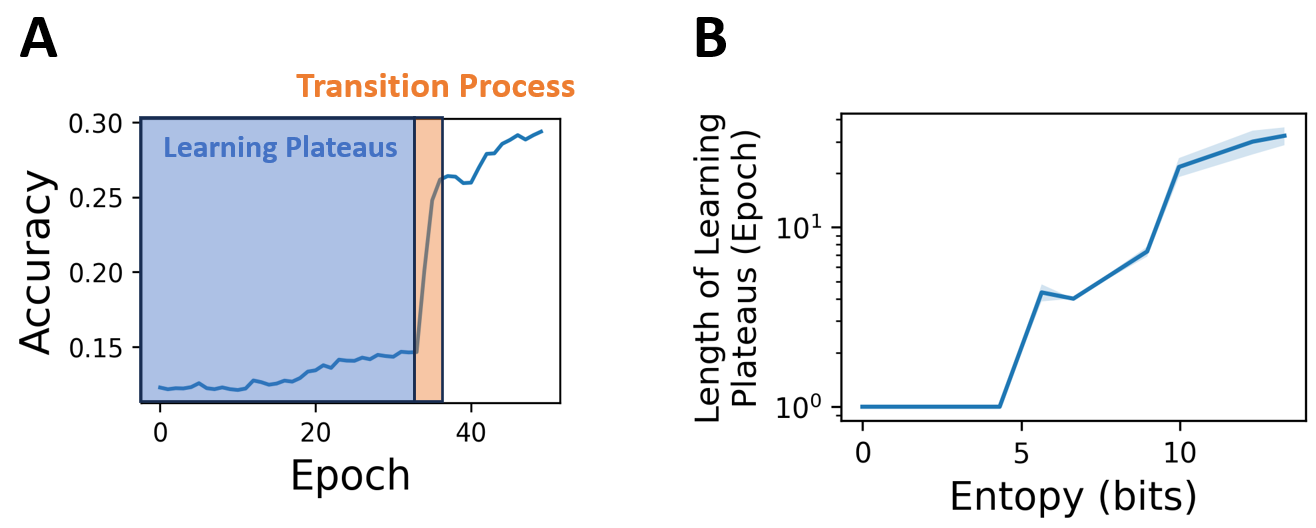}
    \caption{\textbf{Learning plateus.} \textbf{A.} We reproduce the learning plateaus and transition pattern in our synthetic task, similar to Fig~\ref{fig:examples}B. \textbf{B.} The length of learning plateaus increase with the complexity of the task measured by entropy of $\sP(m)$.}
    \label{fig:learning_pateaus}
\end{figure}

\textbf{Learning Plateuas and Transition Point} \quad The primary issue at hand involves learning plateaus and transition patterns. To begin with, we investigate the capability of the synthetic task to mimic the specific pattern depicted in Fig.~\ref{fig:examples}B. Our scrutiny is directed toward the learning trajectory of the synthetic task. As demonstrated in Fig.~\ref{fig:learning_pateaus}A, it becomes evident that the synthetic task is successful in replicating the plateaus and transition pattern observed in the actual task.

\textbf{The length of learning plateaus increases when increases the complexity of task} \quad
We delved into the relationship between task complexity and the duration of learning plateaus. We employed the entropy of the mapping function $m$'s distribution as an indicator of task complexity. To pinpoint the transition process, we identified the first epoch at which the model achieved a test accuracy greater than 0.17. This threshold was selected because the model's accuracy remains below 0.17 before reaching the transition process and rises above this thereafter. As anticipated, more complex tasks necessitate longer learning plateaus (Fig.~\ref{fig:learning_pateaus}B). However, the relationship between the length of the plateaus and the entropy is not linear. With every unit increase in entropy, the extension of the learning plateau is marginal when the entropy is either low or relatively high; conversely, the growth in plateau length is more pronounced at intermediate levels of entropy.
\begin{figure*}
    \centering
    \includegraphics[width=0.93 \textwidth]{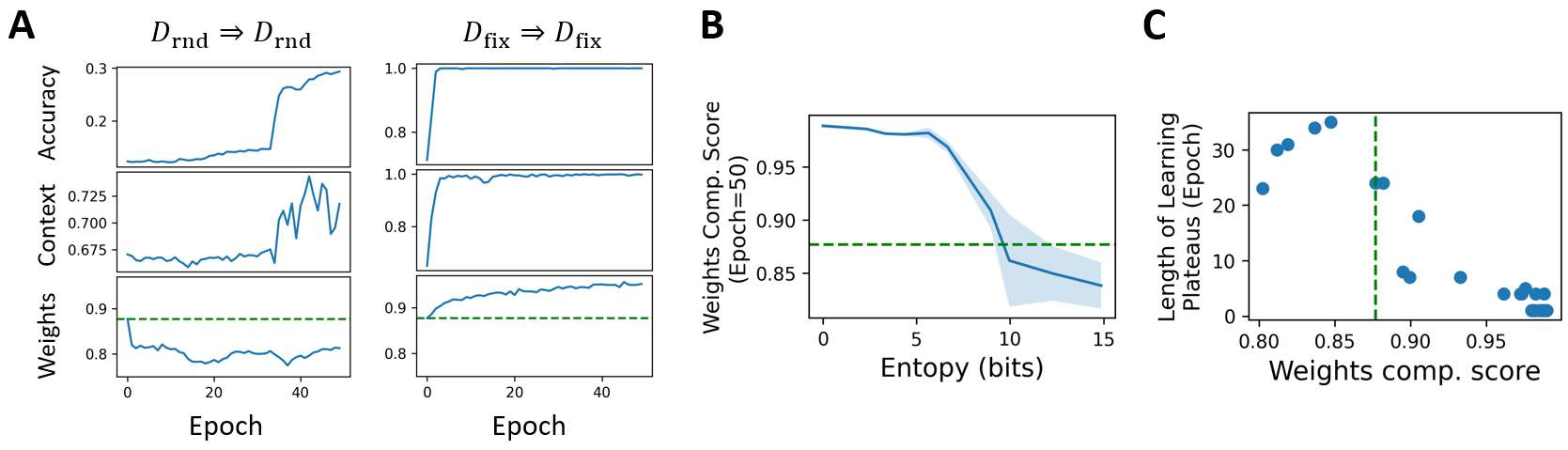}
    \caption{Weights component and learning plateaus. \textbf{A.} The weights component score is increasing under $D_\fx \Rightarrow D_\fx$, while the weights component score is descreasing under the $D_\rnd \Rightarrow D_\rnd$ setting. Note that the “Weights” and “Context” in the figure are short for weights comp. score and context comp. score respectively. \textbf{B.} The weights component score after 50 epoch training decreases when increasing the complexity of the task. The dashed green line indicates the weights component score at the initialization point. \textbf{C.} The weights component score at 50 epoch negative correlates with the length of learning plateaus. The dashed green line indicates the weights component score at the initialization point.}
    \label{fig:weights}
\end{figure*}

\subsection{Dysfunction of weights component}

In the preceding section, we analyzed the plateaus-and-transition pattern in in-context learning concerning task complexity. In this section, our objective is to delve deeper into the role that internal mechanisms—specifically, the quality of the weights and context components—have in influencing the plateaus-and-transition pattern.

\textbf{Confusing pattern of weights component} \quad We executed the task under two conditions: $D_\fx \Rightarrow D_\fx$ and $D_\rnd \Rightarrow D_\rnd$, with the outcomes presented in Fig.~\ref{fig:weights}A. As expected, in the simpler scenario of $D_\fx \Rightarrow D_\fx$, both the context and weights components exhibit improvement throughout the learning process, leading to enhanced in-context learning performance. However, in the more challenging setting of $D_\rnd \Rightarrow D_\rnd$, the weights component deteriorates over time, with its score remaining below the initial value for the entire duration of the learning process. This differs from the context component, which improves with the rise in in-context learning performance. We refer to the situation where the weights component score falls below the starting value as a \textbf{dysfunction} of the weights component. This outcome is intriguing because training has no effect in improving the weights component. To gain better insight into this phenomenon, we carried out additional experiments across varying levels of task complexity. The model was trained for 50 epochs on these tasks, and we monitored the weights component score post-training. We found that the weights component score gradually declines as the entropy increases, eventually stabilizing at around 0.8, as shown in Fig.~\ref{fig:weights}B.

\textbf{Weight component degradation is linked to duration learning plateaus.} \quad Our primary concern is the duration of learning plateaus, and we seek to comprehend its connection to the in-weights component. To investigate this, we graphed the relationship between learning plateau length and weights component score after 50 epochs, as shown in Fig.~\ref{fig:weights}C. Our analysis reveals an approximately linear correlation between the weights component score and the learning plateau duration. Short learning plateaus occur when there is a significant improvement in the in-weights component score. Conversely, long learning plateaus arise when the weights component is dysfunctioning or on the edge of dysfunction, that is when the weights component score is at or below the initial value.

\textbf{Why the weights component is related to learning plateaus.} \quad We are attempting to comprehend this phenomenon, yet theoretically analyzing the learning process of a multilayer transformer on intricate data proves to be a formidable challenge. Recent theoretical studies~\citep{tian2023scan,deora2023optimization,huang2023context} focus on the learning dynamics of Transformers with one or two layers using simple datasets. Given these limitations, we propose a more attainable, albeit weaker, construction analysis in Appendix \ref{sec:contruction_analysis}. Our approach is grounded in the notion that if a model with a good weights component can enhance its in-context learning capabilities with just a few additional parameters, then the weights component must be pivotal for achieving in-context learning prowess. To test this idea, we hypothesize that the in-weights component has been perfectly learned within a specific layer of the Transformer. Our findings reveal that by adding at most three extra Transformer layers specifically tailored for processing contextual information, the model demonstrates significant in-context learning performance. These outcomes suggest that in-context learning abilities are more readily achieved when the weights component is well-optimized.

\section{Breaking Through Learning Plateaus}
The previous section examined the mechanisms behind learning plateaus and identified their connection to the weights component. In this section, our goal is to investigate \textit{whether we can shorten
the learning plateaus or improve the performance increasing for each transition process without scaling models}. We have chosen the $D_\rnd$ configure as the configuration of the test set by default, as it has demonstrated pronounced learning plateau behavior, and this particular scenario is known to be more effective in evaluating in-context learning capabilities, as discussed by~\citet{wei2023larger,min2022rethinking}.
\begin{figure*}
    \centering
    \includegraphics[width=0.9 \textwidth]{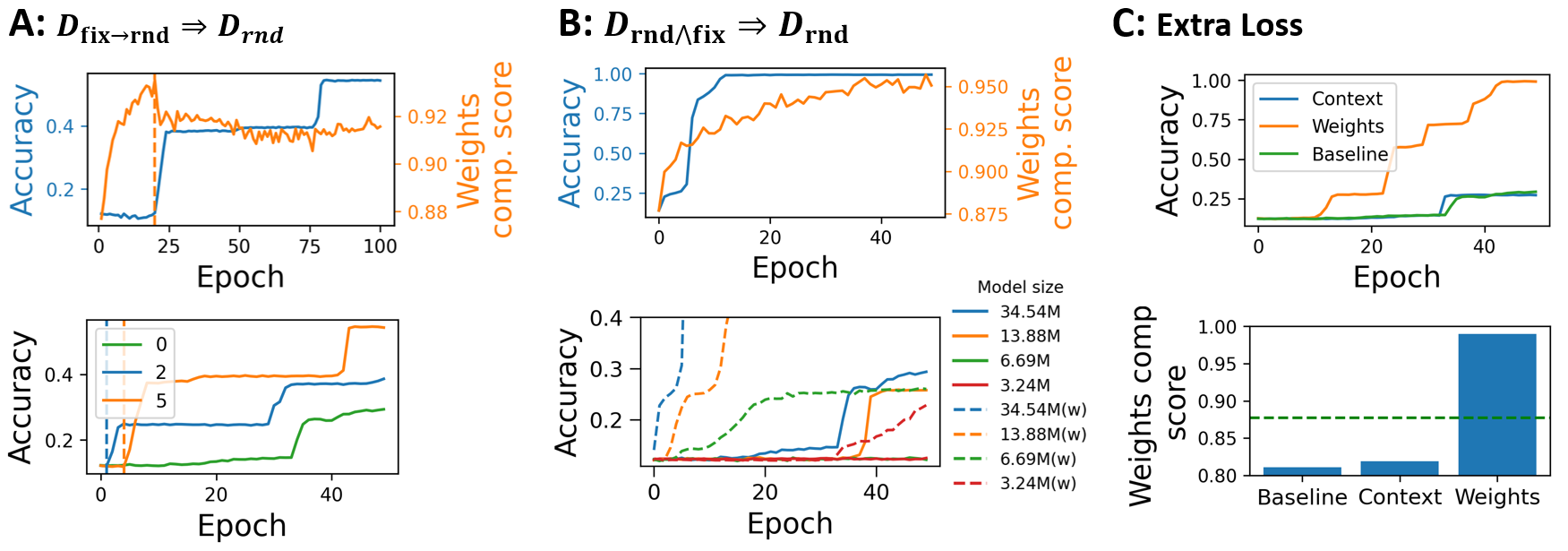}
    \caption{Three methods are proposed to assist in overcoming learning plateaus. \textbf{A: Effective of the warm-up method.} \textit{Top:} Employing $D_{\fx}$ as a warm-up for the Transformer significantly mitigates learning plateaus. The dashed line indicates the transition point from $D_{\fx}$ to $D_{\rnd}$.  \textit{Bottom:}  We execute the transition from $D_{\fx \to \rnd} \Rightarrow D_{\rnd}$ at various switching points. The curve labeled "2" signifies the switch from $D_{\fx}$ to $D_{\rnd}$ at epoch 2. The curve labeled "0" serves as the baseline, that is, $D_{\rnd} \Rightarrow D_{\rnd}$. The dashed lines highlight the respective switching points. \textbf{B: Combining $D_{\fx}$ and $D_{\rnd}$.} \textit{Top:} Mixed training substantially improves the weights component score during the learning process and eliminates learning plateaus. \textit{Bottom:} Boosting the weights component can promote the development of in-context learning capabilities in smaller models. The dashed line depicts the task configuration $D_{\fx \land \rnd} \Rightarrow D_{\rnd}$, while the solid line represents the $D_{\rnd}\Rightarrow D_{\rnd}$ setting. \textbf{C: Extra Loss.} \textit{Top:}  Incorporating a weights loss can significantly enhance learning, whereas adding context loss does not have a noticeable impact. The baseline is $D_{\rnd} \Rightarrow D_{\rnd}$. \textit{Bottom:} With the weights loss, the Transformer can attain a commendable weights component score after 50 epochs of training. The green dashed line indicates the weights comp. score at the initialization point.}
    \label{fig:break}
\end{figure*}

\subsection{Weights warm-up method}

Previous experiments demonstrate that the model better learns the weight component more effectively in the $D_{\fx}$ setting. A straightforward intuition is whether we can use $D_{\fx}$ to improve the weights component in $D_{\rnd}$ test set. We give a further analysis of the relation between $D_\rnd$ and $D_\fx$:

\textbf{Increasing weights component with $D_{\fx}$:} Recall that we have $\sP(y_p|\rvx_p,s_c)\sim \sP(v_p|\rvx_p)\sP(e_h|s_c,m)\sP(m|s_c)$. Under $D_{\fx}$ setting, we only has one mapping function $m_0$. Consequently, our model can readily learn that $\sP(m_0|s_c)=1$. This simplification allows the model to concentrate on mastering $\sP(v_\rvx|\rvx)$. Therefore, it is anticipated that the model will develop a more refined weights component under this setting.  \textbf{Knowledge transfering between $D_{\fx}$ and $D_{\rnd}$ settings:} The knowledge of $\sP(v_\rvx|\rvx)$ is shared between these two settings. The reason is that $\sP(v_\rvx|\rvx)$ is unrelated to the mapping function $m$ and the context $s_c$. 

Based on this analysis, we propose the following data configuration to improve the weights component on $D_\fx$ before training on $D_\rnd$. This setting is denoted as $D_{\fx \to \rnd}$, which means that we initially train the model on $D_{\fx}$ for a specific epoch (weights warm-up), and then, we train the model on $D_{\rnd}$. We have the following discovery based on the experiments on $D_{\fx \to \rnd} \Rightarrow D_{\rnd}$.

\textbf{Weights warm-up helps to mitigate the learning plateaus.} From Fig.~\ref{fig:break}A Top, we observe a notable enhancement in the weights component during the initial “warm-up” phase. Subsequent to this phase, there is a swift improvement in the accuracy of the in-context learning. Nonetheless, a subsequent decline in the weights component score post-warm-up is apparent. This further verify that the Transformer can not learn a good weights component under $D_\rnd$ settings. Further investigation is conducted by training the Transformer with various “warm-up” durations under the $D_\fx$ setting. As shown in Figure\ref{fig:break}A Bottom, training the Transformer for a brief number of epochs effectively eliminates the initial learning plateaus. Furthermore, this “warm-up” phase contributes to a reduction in the duration of the subsequent learning plateaus. The application of the “warm-up” technique yields significantly better results compared to those without it, particularly after a training duration of 50 epochs.

\subsection{mixed training method}
\label{subsec:joint_effect}

In this section, we try to improve the weights component during the whole training process by designing the $D_{\fx \land \rnd} \Rightarrow D_\rnd$ setting.  In $D_{\fx \land \rnd}$ setting, half training data comes from $D_\fx$ setting, and half training data comes from $D_\rnd$ setting. For a fair comparison, the total number of training data is the same as the training set that uses $D_\rnd$ or $D_\fx$ setting only.

\textbf{The mixed training method significantly boosts the learning process.} \quad As anticipated, both the weights component and the task accuracy exhibit improvements throughout the learning process, as depicted in Fig.\ref{fig:break}B Top. And the results do not show a pronounced learning plateau. This suggests that concurrently enhancing the weights component is beneficial. In Fig.\ref{fig:break}B Bottom, we compare models of various sizes trained on the $D_{\rnd} \Rightarrow D_{\rnd}$ setting (represented by the solid line) against those trained on the $D_{\rnd \land \fx} \Rightarrow D_{\rnd}$ setting (depicted with a dashed line). It is observed that a model of size 6.69M trained on $D_{\fx \land \rnd}$, after 20 epochs of training, can attain an accuracy comparable to a model of size 13.88M trained on $D_{\rnd}$ that has undergone 50 epochs of training. These findings suggest that overcoming learning plateaus can lead to a reduction in the computational resources required for training.

\subsection{Extra loss method}

The previous methods require another data setting $D_{\fx}$ to improve the weights component. Here, we consider another alternate method by providing an extra supervision signal to improve the weights component when we cannot find the $D_{\fx}$ setting.  We consider two extra loss settings. The $f'_{\rvw',s_c}(x)$ is denote as the subnetwork of $f_{\rvw,s_c}(x)$ without the output classifier. We denote the $cls_e$ as the classifier for the factor value of factor $e$.
The weights loss is defined as
\begin{equation}
    \label{eq:extra_weights}
    \mathcal{L}_{w}(\theta,s)=\frac{1}{L|E|}\sum_{e\in E}\sum_{i=1}^L \boldsymbol{l}( cls_e (f'_{\rvw',s^{(i-1)}}(\rvx_i)),v^{(e)}_{\rvx})
\end{equation}
The context loss is defined as 
\begin{equation}
    \mathcal{L}_{c}(\theta,s)=\frac{1}{L}\sum_{i=1}^L \boldsymbol{l}( cls_{e_h} (f'_{\rvw',s^{(i-1)}}(\rvx_i)),e_h^{(s)}),
\end{equation}
where $cls_{e_h}$ is the classifier to predict hidden factor, and $e_h^{(s)}$ is the hidden factor for sequence $s$.
Then, the original loss function of Equation \ref{eq:seq_loss} is modified into $\mathcal{L}(\theta,s)+\lambda \mathcal{L}_{c}(\theta,s)$ or $\mathcal{L}(\theta,s)+\lambda \mathcal{L}_{w}(\theta,s)$. The $\lambda$ is chosen as $0.1$ in our experiments.

\textbf{Add weights loss speedup the learning process while adding context loss fails.} \quad Fig.~\ref{fig:break}C Top reveals that incorporating an additional weights loss significantly aids the Transformer model in overcoming learning plateaus. In contrast, adding an extra context loss yields only a marginal benefit. These outcomes further substantiate our hypothesis that enhancing the weights component is crucial for breaking through learning plateaus, rather than concentrating on the context component. As shown in Fig.~\ref{fig:break}C Bottom, we note a marked improvement in the weights component when an extra weights loss.

\subsection{Further Exploration: Simple Functions Tasks}

In this section, we give further exploration of the relation between the weights component and the learning plateaus on the simple function tasks~\citep{garg2022can} and further understand the role of weights component in learning plateaus.

\textbf{Two Roles of Weights Component} \quad Initially, I'd like to emphasize that the weights component serves two primary functions:
1) It takes the examples $x_i$ into the internal working of the Transformer.
2) It aims to find a better representation for $x_i$.  Considering this, the dysfunction of the weights component can stem from these two sources correspondingly.
In the previous analysis of the Shape3D task, the weights component draws from two sources. Since simple function tasks lack representation learning, the dysfunction of the weights component might primarily be attributed to the first reason.

\textbf{Date generation process} \quad In this task, we generate a dataset with $n$ sequences. The generation process for each sequence is:
\begin{itemize}
    \item Initially, we sample a vector $\rvw$ from the Gaussian distribution $\mathcal{N} (0,\mathbf{I}_d)$ in $\mathbb{R}^{d}$. Each component of $\rvw$ is drawn independently.
    \item Subsequently, we sample $\lbrace \rvx_1,\rvx_2,……,\rvx_L \rbrace$, where each $\rvx_i \in \mathbb{R}^d$ is drawn from a Gaussian distribution $\mathcal{N}(0,\boldsymbol{I}_d)$. 
    \item The labels are then determined using the formula $y_i=sign (\rvw^T \rvx_i)$. Finally, we obtain a sequence $s=\lbrace (\rvx_1,y_1), \cdots, (\rvx_L,y_L) \rbrace$.
\end{itemize}

\textbf{Exploring Framework} \quad The training methodology aligns with that described in the paper~\citep{garg2022can}. We'll modify the dimension of $\rvx_i$, i.e. the value of $d$ to modulate the difficulty of the task. The evaluation of the weights component follows the same process outlined in Appendix \ref{app:probe_model}, but with the utilization of the metric $MSE_p=\mathbb{E}_{\rvx_i}{\Vert \tilde{\rvx}_i-\rvx_i\Vert}$ to assess its performance, where $\tilde{\rvx}$ is the prediction of the probe model. We use the Transformer with 6 layers and we probe at the layer 3. It's worth noting that unlike the weight component score, where higher is better,  lower values of $MSE_p$ indicate better weights component.
\begin{figure}
    \centering
    \includegraphics[width=0.45 \textwidth]{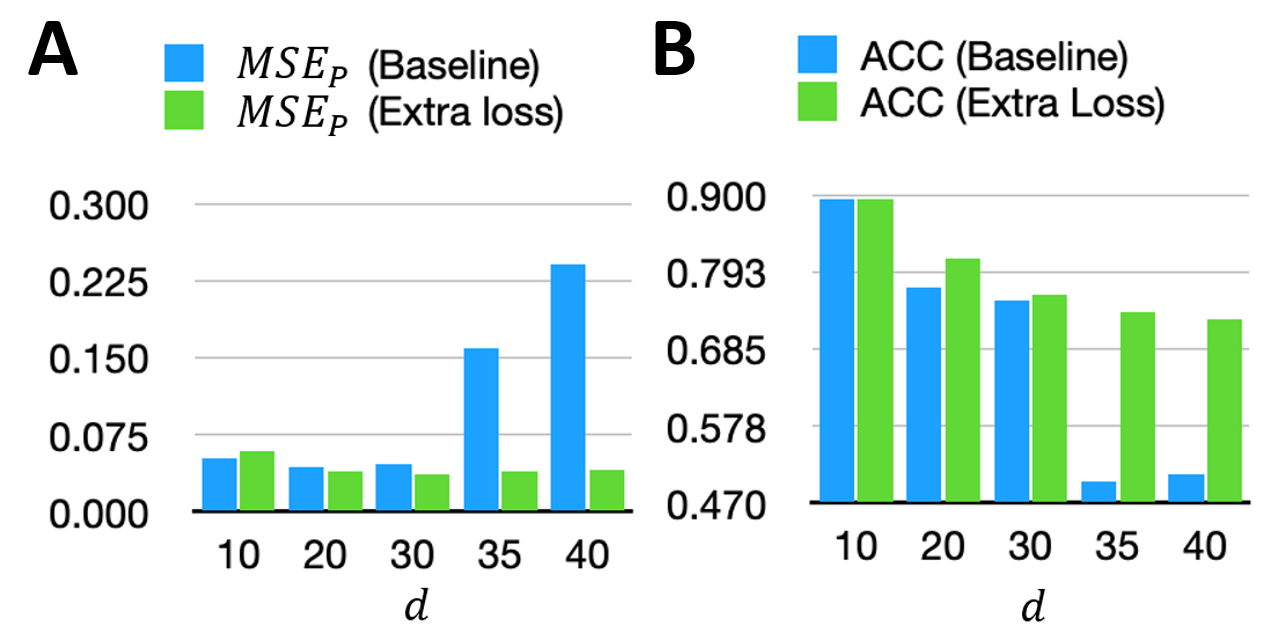}
    \caption{\textbf{Experiments results on simple function tasks after 50 epochs.} \textbf{A:} the dysfunction of the weights component happens when $d>30$. \textbf{B:} the effect of Extra Loss technique is significant only when the dysfunction of weights component happens.}
    \label{fig:simple_functions}
\end{figure}

\textbf{Results} \quad The $MSE_p$ and test accuracy are given in Figure \ref{fig:simple_functions}. We find that: 1) Dysfunction in the weights component is evident in the SimpleFunction dataset when $d>30$.
The Transformer exhibits poor performance in these situations. 2) the effect of the Extra Loss technique is significant only when the dysfunction of weights component happens. The effect is incremental when the Transformer doesn't experience a significant weights dysfunction. This further verifies the causal relation between the weights component and learning plateaus.

\subsection{Beyond Synthetic Task}
\label{sec:beyond}
\begin{figure*}[h]
    \centering
    \includegraphics[width=0.85 \textwidth]{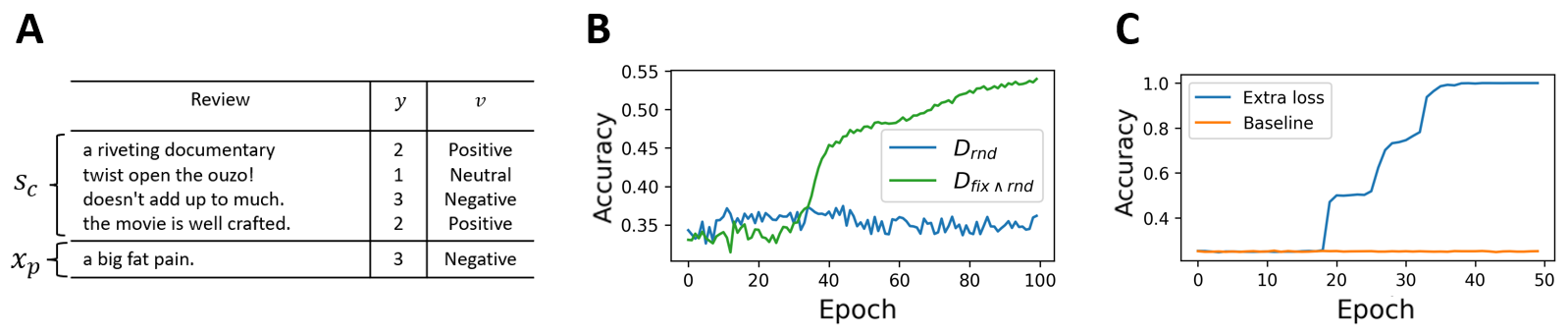}
    \caption{\textbf{Experiments on natural language tasks.} \textbf{A:} Example of dataset SST-ICL task. The example label $y$ is obtained from $v$ by map function $m$, i.e. $y=m(v)$. \textbf{B:} Results on SST-ICL task. We explore the $D_{\fx \land \rnd} \Rightarrow D_{\rnd}$ and $D_{\rnd} \Rightarrow D_{\rnd}$ settings. The dashed line denotes the time when we transit from $D_{\fx}$ to $D_{\rnd}$. \textbf{C:} Results on WordSelection Task. Adding extra weights loss has a significant effect in shortening the learning plateaus.
    }
    \label{fig:text}
\end{figure*}
To ensure that the proposed methods are broadly applicable, this section examines their adaptability to natural language processing (NLP) tasks.

\textbf{SST-ICL task} \quad The task employs the SST dataset as outlined by \citet{socher2013recursive}. An illustration of the task is given in Fig.~\ref{fig:text}A. Comprehensive details regarding the dataset structure, training methods, and model configurations are available in Appendix \ref{sec:nlp_app}. We assess the results for both the $D_{\fx \land \rnd} \Rightarrow D_{\rnd}$. It is noted that improving the weights component through training on $D_{\fx}$ similarly aids in-context learning within this domain. Nevertheless, it should be acknowledged that variations exist between the outcomes of the SST-ICL task and those of the synthetic task; specifically, $D_{\fx}$ settings worse the performance when trained on small epochs, which is evident in the Figure~\ref{fig:text} that the $D_{\fx \land \rnd}$ performs worse than $D_{\rnd}$ when training epoch is less than 30. The key reason to this divergence is that the weights component is more difficult to improve compared that in the synthetic one. As a result, more epochs of training are needed to improve the weights component so as to make it come into effect. This explanation is evident by the result in Figure \ref{fig:text} that the model trained on $D_{\fx \land \fx}$ setting significantly outperforms that trained on $D_{\rnd}$ settings after the 35 epochs.

\textbf{WordSelection task} \quad We have devised an additional task named the WordSelection Task, which requires selecting a single word from a group of four options. For example, given the input “hello information learning art → learning”, the task for the model is to identify “learning” as the correct choice from the provided set. The model must infer the correct answer by considering context examples. We offer five such in-context examples, all selecting the words at the same position as the prompt example.
Creating a specific $D_{\fx}$ setting to improve the weights component poses a challenge in this task. Consequently, we opt for the implementation of the extra weights loss technique. We define the factor as the position and the factor values of a factor is the corresponding word in this position. For example, with $\rvx$ being “hello information learning art”, we define $v_{\rvx}^{(e_1)}$ as “hello”, $v_{\rvx}^{(e_2)}$ as “information”, $v_{\rvx}^{(e_3)}$ as “learning” and $v_{\rvx}^{(e_4)}$ as “art”. The extra weights loss is then applied in a manner consistent with what is described in Eq. (\ref{eq:extra_weights}).
Our experimental findings align with those from the synthetic task, indicating that the advantage of integrating an extra weights loss is not limited to synthetic environments. The results highlight that adding extra weights loss can be a beneficial strategy across different task types.

\section{Discussion}

\textbf{Q1: What are the reasons and intuitions behind the dysfunction of weights?} \quad The observed dysfunction in the Transformer's performance may stem from the confluence of two crucial factors: 1) The problem-solving process relies on the information derived from both context examples and query examples.  2) The contextual information is hard to obtain. 
In situation 1, where both contextual and weight information are critical, the Transformer is compelled to harness both effectively. Yet, under condition 2, where the contextual information proves difficult to learn, the Transformer finds itself unable to fully extract this vital knowledge. Consequently, it lacks the incentive to refine the weight component at this stage, as doing so would not yield appreciable improvements in overall performance. This phenomenon aligns with our observation in Figure 4C that the weight dysfunction occurs predominantly when the in-context tasks exhibit a high degree of complexity, as measured by the high entropy of the distribution of $m$.
As a result, we witness the manifestation of weight dysfunction, characterized by the stagnation or regression of the weight component during the learning process, despite the Transformer's ongoing attempts to adapt and solve the given problem.

\textbf{Q2: Whether the decomposition of weights and context components is general?} \quad The conceptual decomposition is general based on the following reasons:

1) Firstly, the decomposition of the weights component and the context component is conceptual instead of physical and only for analysis purposes. The decomposition stems from the understanding that the in-context learning task necessitates information from both its context samples and the query example. 

2) Secondly, our evaluation of the weights component and context component is not based on the physical decomposition of these components. We employ complex probe methods (see Section A.3) to analyze these two components because we have only conceptually decomposed them.

3) We tested our method across various scenarios, including the Shape3D task, SST-ICL task, Word Selection Task, and SimpleFunction task, which consistently validated the efficacy of our approach.

\section{Conclusion}
This paper establishes a connection between the weights component and learning plateaus. Building on this connection, the paper proposes three strategies to overcome these plateaus. These strategies have proven to be effective in both synthetic and natural language tasks.

\section{Limitation}
In this paper, we mainly focus on understanding of the learning plateaus of in-context learning with Transformer, where the Transformer requires the information from the context examples and the query example to make a prediction. Therefore, our method cannot explain the learning plateaus phenomenon outside this score. There are some works~\cite{nanda2023progress,power2022grokking} that discuss the learning plateaus phenomenon in supervised learning. Our work fails to explain these phenomena.

\section*{Acknowledgements}
Jingwen Fu, Tao Yang, and Nanning Zheng are supported by the National Natural Science Foundation of China (Grant No. 62088102).

\section*{Impact Statement}
The goal of this paper is to understand the mechanism of learning plateaus in Transformers and find a method to avoid learning plateaus. Our work doesn't have a direct influence on society. However, future works based on our work may influence society but it is unpredictable currently.


\bibliography{example_paper}
\bibliographystyle{icml2024}

\newpage
\appendix
\onecolumn
\section{Detail of experiments on sythetic tasks}
\label{app:syn}
\subsection{3DShape} 
3dshapes\footnote{https://github.com/deepmind/3d-shapes}~\citep{kim2018disentangling} is a dataset of 3D shapes procedurally generated from 6 ground truth independent latent factors. These factors are floor colour, wall colour, object colour, scale, shape and orientation. All possible combinations of these latents are present exactly once, generating N = 480000 total images. Latent factor values including:
1) floor hue: 10 values linearly spaced in [0, 1],
2) wall hue: 10 values linearly spaced in [0, 1],
3) object hue: 10 values linearly spaced in [0, 1],
4) scale: 8 values linearly spaced in [0, 1],
5) shape: 4 values in [0, 1, 2, 3], and
6) orientation: 15 values linearly spaced in [-30, 30].



\paragraph{Shape3D dataset} We employ the Adam optimizer \citep{kingma2014adam} and mini-batch training to optimize the loss function $\mathcal{L}(\theta,S)$. Here, we use cross-entropy as the loss function. We utilize a batch size of 128 and set the learning rate to 0.0001. For training purposes, we use $10^5$ sequences and, for evaluation, $4 \times 10^4$ sequences. There is no overlap between images in the training sequences and those in the evaluation sequences.

\subsection{Architecture Detail}
We employ a pre-trained Variational Autoencoder (VAE) to transform images into tokens. The encoder of the VAE comprises seven convolutional layers with ReLU activations, followed by three linear layers two ReLU activations. Conversely, the VAE's decoder is structured with three linear layers and two ReLU activations, which precede a sequence of seven convolutional layers also with ReLU activations. The resulting latent representation serves as the token representation of the input image. The label embedding is learned during the in-context training process. Our study's primary goal is to explore the characteristics of in-context learning. To this end, we utilize a causal Transformer architecture that restricts each token to only interact with preceding tokens. Specifically, the default configuration of the Transformer, denoted as $f$, includes 12 layers, 4 attention heads, and an embedding dimension of 128. As depicted in the lower section of Figure~\ref{fig:break}B, we experiment with varying model sizes. Detailed configurations for these model sizes can be found in Table \ref{tab:config_model}.

\begin{table}[h]
    \centering
    \caption{Model configure with different size in Fig \ref{fig:break}B Bottom}
    \begin{tabular}{c|c|c|c|c}
    \hline
    Number of Layers     & 12 & 6& 6 &3 \\
    Attention head     & 4 & 4& 2 &2\\
    Embedding size     & 128 &64 &32 &16\\
    Model Size     & 34.53& 13.88 & 6.69 & 3.24\\
    \hline
    \end{tabular}
    \label{tab:config_model}
\end{table}

\subsection{More detail about probe framework}
\label{app:probe_model}
\begin{figure}[h]
    \centering
    \includegraphics[width=0.8  \textwidth]{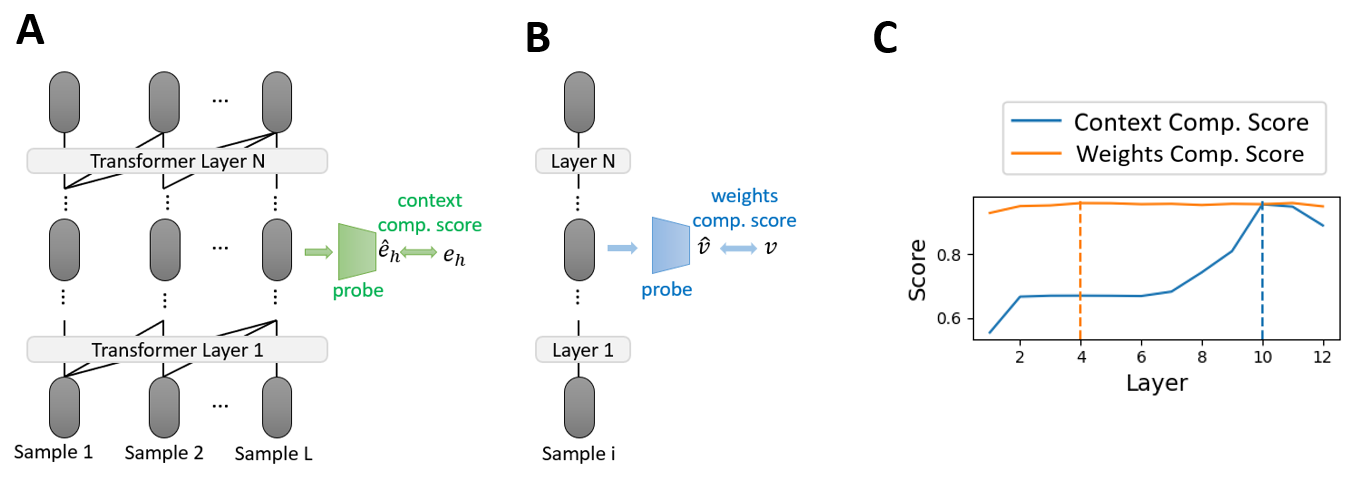}
    \caption{Probe method. \textbf{AB:} illustration of the probe method.  \textbf{C:} The weights and in-context score when we probe at different layers. We choose the $D_{\fx \land \rnd} \Rightarrow D_\rnd$ settings. The dashlines marked the chosen layers in the experiments. }
    \label{fig:probe_method}
\end{figure}
We employ metrics for numerical evaluation of components and in-context learning performance. Since the components are hidded in the representation, we use the probe method \citep{alain2016understanding}. The probe classifier has a single linear layer, with softmax and cross-entropy calculating the loss. \textbf{Because we don't know the specific form of $g_{weights}$ and $g_{context}$, we choose the representation within the Transformer that given the high score for weights and context components for approximating.} We use the linear probe because using more complex probe model doesn't have significant improve in accuracy and the change of the accuracy during the training process is more important than the absolutely value. The probe model is trained utill totally converge. The probe is trained for 2 epoch for context component and 1 epoch for weights component.
The in-weight probe predicts values of six factors of all images, while the in-context probe identifies the hidden factor for each sequence. The details are as follows.

\textbf{context comp. score} \quad Context comp. score measures whether the Transformer can capture the information from all the context examples, i.e., whether the inner representation within Transformer can capture the distribution $\sP(y_p|\rvx_p,s_c)$. Given the test set $S'$, the context comp. score is calculated as $\frac{1}{|S'|} \sum_{s\in S'} \mathbf{1}_{\hat{e}_{h,s} = e_{h,s}}$, where $\mathbf{1}_{\text{expr}}$ is indicator function, $s$ is the sequence in the dataset $S$,  $e_{h,s}$ is the hidden factor for the  sequence $s$, and $\hat{e}_{h,s}$ is the prediction of probe classifier. We use $|\cdot|$ to denote the corresponding size of a set. 

\textbf{weights comp. score} \quad The weights comp. score measures whether the Transformer can learn the information from individual examples, i.e. whether the inner representation of Transformer can capture the distribution $\sP(v_x|\rvx)$. Because the distribution $\sP(v_x|\rvx)$ is unrelated to the context examples $s_c$, we remove the influence of context examples by removing the context examples when evaluate the weights component score. The weights comp score  is calculated as $\frac{1}{|S'|} \sum_{s\in S'} \frac{1}{|s||E|} \sum_{(x,y) \in s} \sum_{e\in E} \mathbf{1}_{\hat{v}^{(e)}_x=v^{(e)}_x}$, where $v^{e}_x$ is factor value of factor $e$ and sample $x$ , $\hat{v}^{(e)}_k$ is the prediction of probe classifier, $s=\{(\rvx_1,y_1),\ldots,(\rvx_L,y_L)\}$ is the sequence in the dataset $S'$ and $E$ is the set of all factors. 

\textbf{Accuracy} \quad We measure the accuracy of the prediction of in-context learning task given a fix number of context example as the measure for the in-context performance. In this paper, we choose the number of context examples as $39$.

\subsection{Dataset split}
\label{subsec:dataset_split}
\paragraph{In-context training}

We first split all the the images in Shape3D into two part: the training image set (80 \%) and the test image set (20 \%). Then, we organize all the training images into $S_{\fx}$, $S_{\rnd}, S_{\fx \land \rnd}$, corresponding to $D_{\fx}$, $D_{\rnd}, D_{\fx \land \rnd}$ settings. $S_{\fx}$, $S_{\rnd}, S_{\fx \land \rnd}$  Test image set are also organized into $S'_{\fx}$, $S'_{\rnd}, S'_{\fx \land \rnd}$. Each of $S_{\fx}$, $S_{\rnd}$, and $ S_{\fx \land \rnd}$ contains $10^5$ sequences.  Each of $S'_{\fx}$, $S'_{\rnd}, S'_{\fx \land \rnd}$ contains $4 \times 10^4$ sequences.

\textbf{Probe model training} \quad If we want to probe a model $f_{\rvw,s_c}(\cdot)$ on setting $D_{\rnd}$ (test setting), we will first train the probe model on $S_{\rnd}$ with $f_{\rvw,s_c}(\cdot)$ and we evaluate the probe model on $S'_{\rnd}$ with $f_{\rvw,s_c}(\cdot)$. The same for $D_\fx$ and $D_{\fx \land \rnd}$ settings.

\section{Detail of experiments on langugae task.}
\label{sec:nlp_app}

\begin{figure}[h]
    \centering
    \includegraphics[width=0.8  \textwidth]{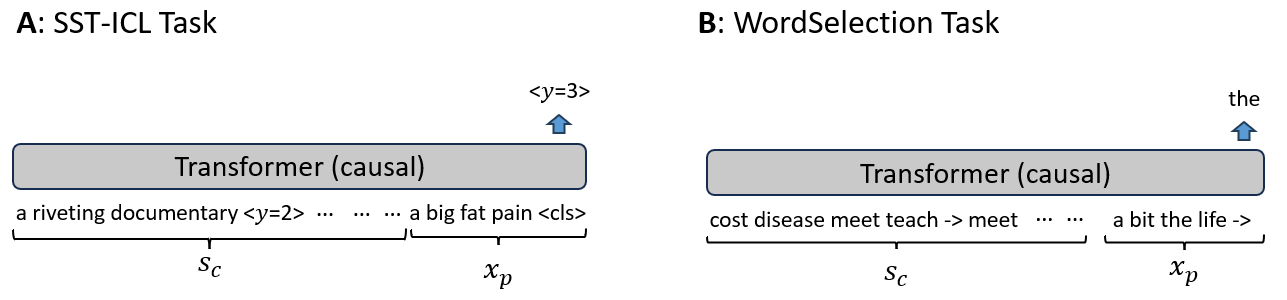}
    \caption{Example of the SST-ICL task and WordSelection task.}
    \label{fig:example_img}
\end{figure}

\subsection{Experiments on Pythia}
We first indroduce the experiment design on the Pythia 13B model, because setup of this experiment is totally different from the other experiments on Section \ref{sec:beyond}.
We leverage the opensouce of the Pythia 13B checkpoints. The Pythia is trained using autoregression framework on the Pile dataset. We evaluate the performance of the Pythia model for each 10k steps. We construct 400 sequence of 4 wordselection task. All the words are randomly sampled from 2000 words. 4 context examples are given for each prompt. The prediction of the Pythia model is processed to ensure that it has same form as the given label.

\subsection{Model Struture}
For both tasks, we use the GPT2 model in this setting. The model is consist of 6 layers, 4 attention heads with 368 embedding size.

\subsection{Dataset Detail}

\textbf{SST-ICL dataset} \quad
The dataset is contructed based on SST \citep{socher2013recursive} datasets. We remove the long review in the datasets and transform the original labels into ``Negative", ``Positive" and ``Neutral". Then, we organize the reviews follow the same way as that in Subsection \ref{subsec:dataset_split}. We produce $10^ 4$ sequence for training and $4 \times 10^3$ for testing. Each sequence contains 5 reviews. We illustrate the example of the dataset in Fig. \ref{fig:text} AB.

\textbf{WordSelection Dataset} \quad We choose 2000 words for the experiments. The 2000 words is organzed into $10^5$ training sequence and $5 \times 10^4$ training sequence. Each sequence contain six examples and five examples served as in-context examples and the rest served as the prompt example.
\label{app:wordselection}

\subsection{Training Detail}
For SST-ICL task and WordSelection task, the models are both trained using AdamW optimizer with learning rate $2e-5$. We choose the batch size as 64.

\section{Other Related works}
\label{sec:other_related_works}

\subsection{In-weights and in-context learning}
Previous studies~\citep{chan2022data,chan2022transformers} examined the relationship between in-weights learning and in-context learning. The division of in-weights learning and the in-context learning process is conceptually similar to our distinction between the influence of the weights and context component. In in-context learning, the Transformer relies on a combination of both its internal weights and the context provided to address a given task. In contrast, for in-weights learning, the model depends exclusively on its weights. \textbf{Nevertheless, there are notable differences between these two notion}s: 1) The internal weights and context components are concerned with capturing information within the Transformer, whereas the focus of in-weights and in-context learning is on how the Transformer tackles a task. In-weights learning is a rename for regular supervised learning. 2) Both weights and context components coexist and play roles within the paradigm of in-context learning. While in-weights learning is distinguished from in-context learning.

\subsection{Evidence of previous works regarding weights and context component are both important for in-context learning}

In this section, we provide evidence about that the in-context and weights components in practice tasks.

\textbf{Intuition 1: Influence of words replacing} \quad A key difference between the weights and context components lies in the susceptibility of the weights component to word substitution. The weights component can be easily disrupted if a word is replaced with a token that was not present during the training phase, as the weights lack information about this new token. On the other hand, if the context examples are rich in information, the meaning of this new token can still be deduced. This mirrors the human ability to infer the meaning of an unknown word based on its context. If word substitution leads to a decline in performance, it suggests that the Transformer's prediction relies heavily on the weights component.

\textbf{Intuition 2:Influence of number of in-context examples} \quad The efficiency of the context component is expected to rise with the inclusion of more context-specific examples, a characteristic not shared by the weights components, which remain unaffected by the addition of in-context examples. Therefore, if performance improves with the integration of more context-specific examples, it would suggest that the Transformer's prediction is heavily influenced by the context component.

\textbf{Intuition 3: Zero-shot performance} \quad The zero-shot performance can directly indicate the effectiveness of the weights component. This is because no in-context examples are provided in this scenario, reducing the problem to a traditional supervised one

Based on the intuitions above, we collect the related experiments in practice paper.

1. \citet{min2022rethinking} discovered that (1) performance can be improved by increasing the number of in-context examples. (2) Changing the labels of in-context examples does not influence the predicted label. The first discovery indicates that the prediction relies on the context components. The second discovery suggests that the Transformer uses the weights component for label prediction, given that there is no observed change when the labels of in-context examples are altered.

2. \citet{brown2020language} found that larger models are increasingly effective at utilizing in-context information. This suggests that in real-world scenarios, the efficiency of the context component improves with the enlargement of the model's size. \citet{brown2020language} also found that enhancing the model size can boost its zero-shot capabilities. These findings suggest that scaling the model can enhance both the weights and context components, and the model employs these two components to address the problem.

3.\cite{wei2023larger} carried out research on a two-class classification issue. They conducted experiments in which they altered a certain percentage of labels in the context examples to ascertain if the model's prediction would also change. If a change was observed, it would imply that the prediction relies on the context components. If no change was noticed, the prediction would be considered to depend on the weights component. Their results were intermediate, suggesting that both weights and context components contribute. Additionally, they found that enhancing the model size increases the impact of in-context examples.

\subsection{Related works for understanding Transformer}
\textbf{Analysis of Transformer} \quad The analysis of Transformers can be broken down into two main components: examining the expressibility of Transformers and comprehending the mechanisms of learned Transformers. To analyze the expressibility of Transformers, a common approach is to determine if they can solve specific problems by constructing appropriate weights. \citet{giannou2023looped} demonstrates that Transformers can function as Turing machines, while \citet{liu2022transformers} shows that they can learn shortcuts to solve automata problems. In addition to expressibility, researchers have also investigated the mechanisms behind learned Transformers. \citet{bietti2023birth} examines Transformers from a memory standpoint, and \citet{tian2023scan} focuses on single-layer Transformers. While the analysis of Transformers is crucial to our work, our ultimate goal differs; we aim to bridge the gap between representation learning and in-context learning.

\textbf{Exploration of representation within Transformer.} \quad Owing to the widespread use of Transformers, numerous studies \citep{li2022emergent,voita2020information} seek to investigate their internal representations as a means of comprehending their functionality. The most prevalent approach involves utilizing probe models and tasks to discern the information stored within these representations \citep{voita2020information, schouten2022probing}. Taking a different perspective, \citet{voita2019bottom} explores the flow of information across Transformer layers and how this process is influenced by the selection of learning objectives. 
Our work shares similarities with these studies in that we employ the probe method to examine representations. However, our focus differs in that we do not concentrate on the semantic meaning within the representation. Instead, we investigate how the weights and in-context information impact representation.


\section{Proof of Proposition \ref{prop:decomposite} }
\begin{proposition}
Given $y_p$, probability of $\sP(y_p|\rvx_p,s_c)$ can be decomposite as:
\begin{equation}
\sP(y_p|\rvx_p,s_c)=\sum_{v_p,m,e_h}\sP(y_p|v_p,m,e_h)\sP(v_p|\rvx_p)\sP(e_h|s_c,m)\sP(m|s_c).
\end{equation}
\end{proposition}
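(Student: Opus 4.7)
The plan is to derive the decomposition directly from the generative model in Definition \ref{def:data_gen} by applying the law of total probability over the latent variables $(v_p, m, e_h)$ and then invoking the conditional independence structure induced by the generative process. Concretely, I would first write
\begin{equation*}
\sP(y_p \mid \rvx_p, s_c) = \sum_{v_p, m, e_h} \sP(y_p, v_p, m, e_h \mid \rvx_p, s_c),
\end{equation*}
and then factor the joint conditional using the chain rule into four terms, one for each of $y_p$, $v_p$, $e_h$, $m$ given the appropriate conditioning sets.

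The second step is to simplify each chain-rule factor using the independences implicit in Definition \ref{def:data_gen}. Specifically, I would argue: (i) $\sP(y_p \mid v_p, m, e_h, \rvx_p, s_c) = \sP(y_p \mid v_p, m, e_h)$, because once the prompt's factor values, the mapping, and the hidden factor are fixed, the label is deterministic via $y_p = m(v_p^{(e_h)})$ and therefore cannot depend further on $\rvx_p$ or $s_c$; (ii) $\sP(v_p \mid m, e_h, \rvx_p, s_c) = \sP(v_p \mid \rvx_p)$, since the factor values of the prompt image are determined by the image alone, independently of the mapping, hidden factor, and context; (iii) $\sP(e_h \mid m, \rvx_p, s_c) = \sP(e_h \mid s_c, m)$, because the hidden factor is generated at the sequence level from $s_c$ (and conditionally on $m$) and is independent of the particular prompt image; and (iv) $\sP(m \mid \rvx_p, s_c) = \sP(m \mid s_c)$ for the same sequence-level reason. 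Substituting these identities gives the claimed decomposition.

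The main obstacle is articulating precisely which conditional independences are licensed by Definition \ref{def:data_gen}, since that definition specifies the joint $\sP(\rvx_p, y_p, s_c)$ only in terms of $\sP(m), \sP(e_h), \sP(\rvx_p, y, s_c \mid m, e_h)$ rather than as an explicit directed graphical model. I would therefore make the implicit generative order explicit — $m$ and $e_h$ are drawn first at the sequence level, then $\rvx_p$ is drawn with its factor values $v_p$, and finally $y_p = m(v_p^{(e_h)})$ is deterministic — and read off the required independences from this ordering. Once that is done, the proof is essentially a one-line application of total probability followed by a telescoping chain rule.
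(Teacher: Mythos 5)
Your proposal is correct and follows essentially the same route as the paper's proof: total probability over $(v_p,m,e_h)$, chain rule, then the conditional independences $\sP(y_p\mid v_p,m,e_h,\rvx_p,s_c)=\sP(y_p\mid v_p,m,e_h)$, $\sP(v_p\mid \rvx_p,s_c,m,e_h)=\sP(v_p\mid\rvx_p)$, and $\sP(m,e_h\mid\rvx_p,s_c)=\sP(e_h\mid s_c,m)\sP(m\mid s_c)$. If anything, you are more explicit than the paper about which independences are being invoked and why the generative process licenses them.
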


\begin{proof}
    \begin{equation}
    \begin{aligned}
        \sP(y_p|\rvx_p,s_c) &=\sum_{v_p,m,e_h}\sP(y_p,v_p,m,e_h|\rvx_p,s_c)
        \\&=\sum_{v_p,m,e_h}\sP(y_p|\rvx_p,s_c,v_p,m,e_h)\sP(v_p,m,e_h|\rvx_p,s_c,)
        \\&=\sum_{v_p,m,e_h}\sP(y_p|v_p,m,e_h)\sP(v_p|\rvx_p,s_c,m,e_h)\sP(m,e_h|\rvx_p,s_c)
        \\&=\sum_{v_p,m,e_h}\sP(y_p|v_p,m,e_h)\sP(v_p|\rvx_p)\sP(e_h|s_c,m)\sP(m|s_c),
    \end{aligned}
    \end{equation}
    where the first equation is due to the law of total probability, the third equation is leverages the formular $\sP(y_p|\rvx_p,s_c,v_p,m,e_h)=\sP(y_p|v_p,m,e_h)$.

\end{proof}

\section{Contruction analysis of why the weights component is related to the learning plateaus}
\label{sec:contruction_analysis}

\begin{figure}[h] 
  \centering  
  
  \includegraphics[width=0.6\textwidth]{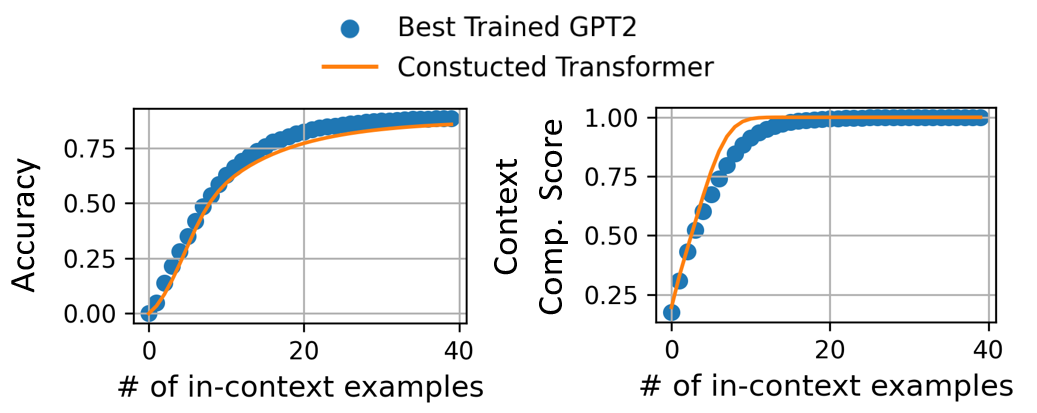}   
  \caption{The constructed Transformer can match the performancce of trained Transformer ($D_{\fx \land \rnd}$ setting) in experiment part}  
\label{fig:verify_theory}
\end{figure}

\textbf{Intuition} \quad In this section, our objective is to comprehend the connection between the weights component of the model and the occurrence of learning plateaus. Owing to the challenges involved in dissecting the training dynamics of the Transformer, we turn to constructive analysis as a methodology. By examining a scenario where a Transformer possesses an effective weights component at a specific layer, and only a small number of additional layers are required to attain proficient in-context learning performance, we can indirectly infer insights about their relationship.

\textbf{Notation} \quad The position embedding is denoted as $\rvp_i=(0,\cdots0,1,0,\cdots) $, where we only have value 1 at $i$-th position and 0 others. The weights for the attention operation of $l$-th layer and $c$-th head in Transformer is denoted as $\rmW^{(l,c)}_Q$, $\rmW^{(l,c)}_K$ and $\rmW^{(l,c)}_V$. The weights of the forward layer in the Transformer are denoted as $\rmW^{(l)}_1,\rmW^{(l)}_2$. We use $E$ to denote all possible values of the factor $e$. we denote $\rvy_i$ as the one hot version of $y$. The vector with all zero values is denoted as $\bzero\triangleq (0, \cdots,0)$.
We consider the naive Transformer~\citep{vaswani2017attention}. The hidden representation of token $i$ in Transformer is denoted as $\rvh_i  \in \mathbb{R}^d$. The hidden representation of $l$-th layer is denoted as $\rmH^{(l)} = [\rvh_1^{(l)},\cdots, \rvh_{2L}^{(l)}]\trans \in \mathbb{R}^{2L \times d}$. Given a input token $\rvx$, we denote $\rvh_{\rvx}^{(l)}$ as its corresponding representation at layer $l$. The factor value of this token is denoted as $v_{\rvx}$. The factor value of the corresponding factor $e$ is denoted as $v_{\rvx}^{(e)}$. The set of all possible values of factor $e$ is denoted as $V_e$. The size of the set is denoted as $|V_e|$.

In our analysis, we explore a modified, more relaxed variant of the Transformer model. The rationale behind this relaxation is underpinned by evidence suggesting that 1) employing the ReLU activation function in the feed-forward layers can achieve results comparable to the original model~\citep{press2019improving}, and 2) the softmax operation may not be essential for the functioning of the Transformer~\citep{wiegreffe2019attention, brunner2019identifiability, richter2020normalized}. The relaxed Transformer is defined as follows:

\begin{definition}
    (Transformer) One layer of the Transformer contains one attention layer and one MLP layer. The calculation of the Attention Layer is
    \begin{equation}
    \operatorname{Attn}^{(l)}(\rmH^{(l)})=\rmH^{(l)}+\sum_{c=1}^{C} \sigma \left(\rmH^{(l)} \rmW_Q^{(l,c)}(\rmH^{(l)} \rmW_K^{(l,c)})^{\mathrm{T}} \right)\rmH^{(l)} \rmW_V^{(l,c)} \rmW_O^{(l,c)}.    
    \end{equation}
    And the calculation of MLP layer is
    \begin{equation}
    \rmH^{(l+1)}=\operatorname{Attn}^{(l)}(\rmH^{(l)})+\operatorname{Relu}(\operatorname{Attn}^{(l)}(\rmH^{(l)})\rmW_1^{(l)})\rmW_2^{(l)}.  
    \end{equation}
Here we consider relaxed case where $\sigma=\operatorname{Id}$.
\end{definition}

Remember that a good weights component implies that we have the ability to deduce $\sP(v_{\rvx}|\rvx)$ based on this component. To streamline the construction, we introduce a stronger assumption: the concept of a perfect weights component. Unlike the definition of a good weights component, which necessitates that the Transformer encapsulates information about $\sP(v_{\rvx}|\rvx)$, a perfect weights component also demands that this information should be readily accessible. If the representations of the images corresponding to different factor values are situated in distinct orthogonal bases, then the factor values' information can be easily decoded. Drawing on this insight, we propose the following definition.

\begin{definition}
    \label{as:token_rep}
    (Perfect weights component) If a Transformer has perfect weights component in layer $l$,  then for all factor $e$, any $i,j$, exists $\rmW_e \in \mathbb{R}^{d \times |V_e|}$ such that $\rvf_{\rvx_1}^{(e)}\cdot \rvf^{(e)}_{\rvx_2}=1$ only when $v^{(e)}_{x_1}=v^{(e)}_{x_2}$, else we have $\rvf_{\rvx_1}^{(e)}\cdot \rvf^{(e)}_{\rvx_2}=0$, where $\rvf_{\rvx}^{(e)}=\rmW_e \rvh_{\rvx}^{(l)}$.
\end{definition}

Under the assumption of a perfect weights component, we can enhance the Transformer by adding at most three extra layers that are specifically designed to learn the context component. The detailed results of this construction are as follows:

\begin{proposition}
    \label{prop:transformer_perfect_token_rep} We consider the data with $n_e$ factors and each factor has $n_v$ values in $D_{\rnd}$ setting.   For causal Transformer with the number of heads larger or equal the number of factors with the hidden size $\mathcal{O}(n_e n_v+L)$, if the Transformer can learn a perfect weights component in layer $k$, then it can learn a representation given $i$ in-context examples with context comp. score $\srs_i= (1-\srs_{i-1})s_i+\srs_{i-1}$ and $\srs_0=s_0$ at layer $k+2$, where $ s_i= 1-\sum_{j=0}^{i} \tbinom{i}{j}  \sum_{k=2}^{|E|} \tbinom{|E|}{k} \frac{k-1}{k} \left( \frac{(n_v-1)^{i-j}}{n_v^{i}} \right)^k \left( 1-\frac{(n_v-1)^{i-j}}{n_v^{i}} \right)^{|E|-k}$,  and we can obtain the accuracy as $\cls_i = \frac{(n_v-1)(n_v^{i-1}-(n_v-1)^{i-1})}{n_v^i}\srs_i+\frac{1}{n_v}$ at $k+3$ layers. 
\end{proposition}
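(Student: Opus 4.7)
The plan is a direct construction: assuming the Transformer has a perfect weights component at layer $k$, I would hand-craft weights for three additional layers on top of layer $k$ and then carry out a probabilistic analysis of the resulting predictor. Because Definition~\ref{as:token_rep} provides orthogonal indicator vectors $\rvf_{\rvx}^{(e)}=\rmW_e\rvh_{\rvx}^{(k)}$ that behave exactly like equality tests on factor values, essentially all of the downstream logic reduces to linear operations followed by ReLU, which the relaxed Transformer (with $\sigma=\operatorname{Id}$) can implement exactly. I would reserve $\mathcal{O}(n_e n_v)$ coordinates of the hidden state for a per-factor memory block, plus $\mathcal{O}(L)$ coordinates for positional tags, matching the stated hidden size.

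For layer $k+1$ I would use one attention head per factor (allowed by the hypothesis that the number of heads is at least $n_e$). Head $e$ uses the component $\rvf_{\rvx}^{(e)}$ of the hidden state to form its key and the one-hot label $\rvy_i$ to form its value, so that after attention each prompt token carries a memory block $M^{(e)}\in\mathbb{R}^{|V_e|\times n_v}$ with $M^{(e)}(v)=\rvy_i$ whenever some earlier context example $i$ had $v_{\rvx_i}^{(e)}=v$, and $\bzero$ otherwise. The causal mask and the position embeddings $\rvp_i$ restrict the aggregation to genuine context tokens, and the orthogonality of the $\rvf^{(e)}$ across factor values prevents any collisions.

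Layer $k+2$ converts these memories into a per-factor admissibility flag: factor $e$ is admissible if $M^{(e)}$ is single-valued on every filled row, meaning no factor value has been mapped to two different labels. This flag can be realized as a ReLU network of constant depth, so one attention-plus-MLP layer suffices. Layer $k+3$ then reads off $M^{(e^\star)}(v_{\rvx_p}^{(e^\star)})$ for a canonically chosen admissible factor $e^\star$; a soft attention over the admissibility vector followed by a linear read-out implements this, with ties broken uniformly among admissible factors.

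The main obstacle is the probabilistic bookkeeping. Let $\srs_i$ be the probability that the construction uniquely identifies the hidden factor after $i$ context examples; since an extra example can only help, $\srs_i=(1-\srs_{i-1})s_i+\srs_{i-1}$ where $s_i$ is the conditional chance that the last example resolves a previously ambiguous case. Deriving the closed form of $s_i$ requires an inclusion--exclusion over subsets of factors that remain jointly admissible: for a fixed set of $k$ factors (containing the true hidden factor), the probability that the other $k-1$ factors stay consistent with the observed labels factorizes as $\bigl(\tfrac{(n_v-1)^{i-j}}{n_v^i}\bigr)^k$, where $j$ counts how many of the $i$ context examples share factor values across the set, and the weight $\tfrac{k-1}{k}$ encodes uniform tie-breaking among the $k$ admissible candidates. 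The accuracy $\cls_i$ then follows by conditioning on (i) whether the hidden factor was correctly identified and (ii) whether the prompt's factor value already appeared in the context, giving the chance-level $\tfrac{1}{n_v}$ baseline plus a gain $\tfrac{(n_v-1)(n_v^{i-1}-(n_v-1)^{i-1})}{n_v^i}\srs_i$. Pinning down this combinatorial accounting, especially the interplay between $j$, $k$, and the tie-breaking weight, is where the bulk of the technical work lies; the Transformer construction itself is largely mechanical once the orthogonality of Definition~\ref{as:token_rep} is in hand.
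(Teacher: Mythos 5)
Your construction is a genuinely different route from the paper's, but it has two concrete gaps. First, the resource budget: you propose that after layer $k+1$ each prompt token carries, for every factor $e$, a full value-to-label table $M^{(e)}\in\mathbb{R}^{|V_e|\times n_v}$. That is $\Theta(n_e n_v^2)$ coordinates, which exceeds the stated hidden size $\mathcal{O}(n_e n_v+L)$. The paper avoids storing any such table: its first added layer only records, per factor, a \emph{pairwise matching signature} --- the vector $\sum_a \rvp_a(\rvv_a^{(e)}\cdot\rvv_i^{(e)})$ compressed to a single scalar via a base-$2$ inner product --- together with the analogous signature for the labels, so the per-factor overhead is $\mathcal{O}(1)$ plus the $\mathcal{O}(L)$ positional block. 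A factor is then ``blocked'' (via a large negative offset and ReLU) whenever its signature ever disagrees with the label signature, and the final layer sums label vectors weighted by similarity of the surviving features rather than indexing into a table.

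Second, and more importantly, your admissibility criterion does not yield the stated $s_i$. You declare a factor admissible when its observed value$\to$label map is single-valued, which is an event about pairwise agreements \emph{among context examples}; the proposition's formula $s_i$ is derived in the paper from a different event, namely that $k$ factors share an identical \emph{prompt-matching pattern} (the term $\binom{i}{j}\bigl(\tfrac{(n_v-1)^{i-j}}{n_v^{i}}\bigr)^{k}$ is the probability that the same set of $j$ context examples matches the prompt on all $k$ factors). These events have different probabilities: with $i=1$ every factor is vacuously admissible under single-valuedness, so your scheme cannot reproduce the stated $s_1$. Relatedly, the recursion $\srs_i=(1-\srs_{i-1})s_i+\srs_{i-1}$ is not a generic ``more examples can only help'' statement; in the paper it comes from an explicit mechanism in layer $k+2$ that autoregressively accumulates the blocking decisions of earlier positions ($\rvm'_{2i}=\rvm'_{2i-1}+\rvm_{2i}$), and you would need an analogous carry-forward mechanism to justify it. The accuracy step is fine in spirit (conditioning on identification and on the prompt's value appearing in context gives the $\tfrac{1}{n_v}$ baseline plus the stated gain), but it inherits the wrong $\srs_i$ from the previous step.
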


\textbf{The constucted Transformer achieve significant performance.} \quad To illustrate that the construction is meaningful, we compare the performance of the constructed model with the previously trained model. We choose the performance of the model trained on $D_{\fx \land \rnd}$ settings as it is the best-performed model on $D_{\rnd}$ setting. We find that the constructed model achieved a comparable performance as that of the trained model. These results indicate that our construction is meaningful.
\subsection{Proof Sketch}

\textbf{a) Contruction of the Transformer.}

We divided the contruction into two steps. The first step is to estimate the factor in the sequence and the second step is to estimate the label based on the discovered hidden factor. Given the sequence $s=\{(\rvx_1,y_1),\ldots,(\rvx_L,y_L)\}$, we short $\rvf_{\rvx_j}$ as $\rvf_j$.

\textit{1. Estimate the hidden factor.} \quad 
According to the perfect weights component assumption, for any $j$, we can project the $j$-th token feature into the space $\rvf^{(e)}_j$. Assuming the $j$-th token is not the prompt token, then we have its label information $y_j$. Then, for $i$-th token, where $i<j$, obviously $i$ is also not the prompt token. As a result, we also have the information about $y_i$. If a factor $e$ is the hidden factor, then we would expect $\rvy_i\cdot\rvy_j=\rvm(v^{(e_h)}_i)\cdot\rvm(v^{(e_h)}_j)=\rvv^{(e_h)}_i\cdot\rvv^{(e_h)}_j=\rvf_i^{(e_h)}\cdot \rvf_j^{(e_h)}$, where $\rvv,\rvy,\rvm$ is the corresponding one-hot version of $v,y,m$. Therefore, if we can find $e$ such that $\rvf_i^{(e)}\cdot\rvf_j^{(e)}$ can have a same value as $\rvy_i\cdot\rvy_j$, for all $i$, then $e$ can be predicted as hidden factor. Based on this intuition, in the construction, we focus on finding a way to compare the value between $\rvf_i^{(e)}\cdot\rvf_j^{(e)}$ and $\rvy_i\cdot\rvy_j$. All these operations are done in the first layer.

\textit{2. Block unrelated information.} \quad After finding the hidden factor, the next step is to block the information that is unrelated to the hidden factor. Blocking unrelated information can remove the influence of it and simplify the following steps.  We add a large negative value to the positions of the representation that is unrelated to the hidden factor. Then, through the Relu operation, all these negative values will be removed.  These operations are placed in Layer 2. 

\textit{3. Predict $y$.} \quad The final step is to predict the $y_p$ for the prompt sample $\rvx_p$. The challenge here is that we don't know the mapping function $m$ that bridges the factor value and the corresponding label. Consider the relation that for any $i,j$ satisfying $i<j$, we have $ \rvv_i^{(e_h)}\cdot \rvv_j^{(e_h)}=\rvf_i^{(e_h)}\cdot \rvf_j^{(e_h)}=\rvy_i\cdot \rvy_j$. Therefore, if we can find a $i$, satisfying that $\rvf_i^{(e_h)}\cdot \rvf_j^{(e_h)}=1$, then we have $y_i=y_j$. Then, we can copy $y_i$ to the representation of $j$-th token and use the final linear layer to output $y_i$ as the prediction for $y_j$.

\textbf{b) Analyzing the performance.}

Given the perfect weights component assumption, there are two source that cause the error prediction. We give the separate analysis below.

\textit{1. Fail to find the correct hidden factor.} Finding the correct hidden factor is essential to make the correct prediction. The correct hidden factor cannot be inferred if there are more than two factors such that the values of $\rvf_i^{(e)}\cdot\rvf_j^{(e)}$ and $\rvy_i\cdot\rvy_j$ can be matched according to the construction of first layer. 

\textit{2. Make error prediction and give the correct factor.} Given the hidden factor, it is also possible that we cannot make the correct prediction if we fail to associate the correct label  the factor value. This will happen if all the context examples don't contain the same factor value of the hidden factor as the prompt image.

\subsection{Proof of useful lemma}

In this section, we proof some useful lemma for the proof. The lemma \ref{lm:copy_past} indicates that the attention head can copy part of the representation from its previous token into current token. The lemma \ref{lm:sub_mlp} indicates that there exists a construction to make the MLP to only operate on the part of its input.

\begin{lemma}
    \label{lm:copy_past}
   One attention head can implement copy and past behavior.
\end{lemma}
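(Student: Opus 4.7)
The plan is to exhibit an explicit choice of the four weight matrices $\rmW_Q,\rmW_K,\rmW_V,\rmW_O$ of a single head that, when applied to $\rmH^{(l)}$, produces an output in which a prescribed sub-block of the representation of a chosen source token $i$ has been written into a prescribed sub-block of the representation of a target token $j>i$, with all other coordinates untouched. Because the relaxed Transformer uses $\sigma=\operatorname{Id}$, the head's contribution at position $j$ is simply $\sum_{i\le j}(\rvh_j\rmW_Q)(\rvh_i\rmW_K)^{\mathrm{T}}\,\rvh_i\rmW_V\rmW_O$, so the task reduces to engineering the $Q$-$K$ inner product to be a $\{0,1\}$-indicator selecting the desired source and engineering $\rmW_V\rmW_O$ to route the correct coordinates.

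First I would reserve, within the hidden dimension $d$, three disjoint coordinate blocks: one for the position embeddings $\rvp_i$, one containing the information to be copied, and one designated as the destination. Using the position block, define $\rmW_Q$ to read off the position embedding of the current token and apply the shift $\rvp_j\mapsto \rvp_{j-\Delta}$ (a permutation on the position coordinates), and define $\rmW_K$ to read off the position embedding unchanged. Then
\begin{equation}
(\rvh_j\rmW_Q)(\rvh_i\rmW_K)^{\mathrm{T}}=\rvp_{j-\Delta}\cdot\rvp_i=\mathbf{1}[i=j-\Delta],
\end{equation}
which, under the causal mask, selects exactly the token at offset $\Delta$ before $j$. (A variant in which $\Delta$ is replaced by a condition like ``same label as token $j$'' can be handled by letting $\rmW_Q,\rmW_K$ read off the label block instead of the position block; the same indicator identity is obtained.)

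Next I would take $\rmW_V$ to be the projection onto the source block we wish to copy, and $\rmW_O$ to be the embedding of that block into the destination coordinates, with zeros elsewhere. Composing, $\rvh_i\rmW_V\rmW_O$ places the selected sub-block of $\rvh_i$ into the destination slot of a $d$-dimensional vector and leaves the rest zero. Summing against the indicator and adding the residual $\rmH^{(l)}$ yields a representation at position $j$ that coincides with $\rvh_j^{(l)}$ except in the destination block, which now carries the copied sub-block of $\rvh_{j-\Delta}^{(l)}$.

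The only real obstacle is bookkeeping: one must check that the coordinate blocks used for position embeddings, for the source, and for the destination can be chosen disjoint, and that the hidden dimension is large enough to carry them simultaneously with whatever other blocks are needed by surrounding layers. This is where Proposition \ref{prop:transformer_perfect_token_rep}'s allowance of hidden size $\mathcal{O}(n_e n_v+L)$ is used, since it provides room both for the position embeddings of all $L$ positions and for the factor/label blocks of size $n_e n_v$. A short verification that the MLP block of the same layer can be set to the zero map (by taking $\rmW_1^{(l)}=0$, so the $\operatorname{Relu}$ contribution vanishes) completes the construction.
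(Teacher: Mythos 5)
Your construction is correct and is essentially the paper's own argument: both use the one-hot position embeddings together with a shift/permutation matrix in $\rmW_Q$ (the paper's $\rmM^{j-i}$, your offset $\Delta$) so that the $Q$-$K$ inner product becomes a $\{0,1\}$-indicator selecting the source token, and then route the desired sub-block through $\rmW_V\rmW_O$ into a disjoint destination block while the residual connection preserves the rest. Your added remarks on block disjointness, hidden-dimension bookkeeping, and zeroing the MLP are harmless elaborations of the same construction.
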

\begin{proof}
    According to the definition of $\rvp_i$, we have $\rvp_i \cdot \rvp_j=0$ if $i\neq j$, otherwise, we have $\rvp_i \cdot \rvp_j=1$.
    We denote
    
    \[\rmM =  
        \begin{bmatrix}  
        0 & 0 &  \cdots &0  & 1 \\  
        1 & 0 &  \cdots &0 & 0 \\  
        0 & 1 &  \cdots & 0 &  0 \\
        \vdots  & \vdots & \ddots& \vdots & \vdots \\  
          
        0 & 0 &  \cdots & 1 & 0  
        \end{bmatrix}.  
    \]
    Then we have $ \rvp_i \rmM= \rvp_{i-1}$.
    For $j>i$, we denote the value of $2j$-th token as $\rvh_{2j}=(\bzero,\bzero,\bzero,\bzero,\bzero,\bzero,\bzero,\rvp_j)$ and $2i$-th token as $\rvh_{2i}=(\rvh'_{i},\bzero,\bzero,\bzero,\bzero,\bzero,\bzero,\rvp_i)$. 
    If we want to copy the value of $2i$-th token to the value of $2j$-th token, we can set the query matrix as $\rmW_Q=(\bzero,\bzero,\bzero,\bzero,\bzero,\bzero,\bzero,\rmM^{j-i})$, the key matrix as $\rmW_K=(\bzero,\bzero,\bzero,\bzero,\bzero,\bzero,\bzero,\rmI)$ and value matrix as $\rmW_V=(\rmW'_V,\bzero,\bzero,\bzero,\bzero,\bzero,\bzero,\bzero)$. Then we have 
    \begin{equation}
         \rvh_{2i}\trans \rmW_Q \cdot  \rvh_{2a}\trans \rmW_K= \rvp_{i} \cdot \rvp_{a} =
        \left\{ 
        \begin{array}{cc}
            1  & a \neq j \\
            0   & a = j
        \end{array} 
        \right.
    \end{equation}
    Therefore, the $2j$-th token can only attend to the token with the position embedding $\rvp_i$.
    If $\rvh_{2i-1}=(\bzero,\bzero,\bzero,\bzero,\bzero,\bzero,\bzero,\rvp_i)$, we have the value of $\rvh_j$ after attention as $\rvh_j^{attn}=((\rvh'_i)\trans  \rmW_V,\bzero,\bzero,\bzero,\bzero,\bzero,\bzero ,\rvp_j)$.
    By setting $\rmW_V$ as different value, we can copy different part information of $i$-th to $j$-th token. Then the lemma is held.
\end{proof}

\begin{lemma}
    \label{lm:sub_mlp}
    For the input $\rvh=(\rvh_1,\rvh_2 ,\rvh_3 )$, where $\rvh_i \in \mathbb{R}^{d_i}$ and $d_1+d_2+d_3=d$, for all $MLP_s(\rvh)=\rmW_2' \relu(\rmW_1' \rvh_2):\mathbb{R}^{d_2} \to \mathbb{R}^{d_2}$, there exists $MLP(h)=\rmW_2 \relu(\rmW_1 \rvh): \mathbb{R}^d \to \mathbb{R}^d$, such that $MLP(\rvh)=(\rvh_1, MLP_s(\rvh_2) , \rvh_3)$.
\end{lemma}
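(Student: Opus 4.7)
The plan is a direct block-matrix construction. The only real obstacle is that a single \relu\ cannot reproduce an identity map on inputs that may have negative coordinates; all other pieces are routine bookkeeping. I will handle the identity-through-\relu\ issue with the standard "positive/negative split" trick, using the identity $x = \relu(x)-\relu(-x)$ for every real $x$.

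First, I would pick a hidden width of the form $d_1 + d_1 + r + d_3 + d_3$, where $r$ is the hidden width of $\mathrm{MLP}_s$ (i.e., the number of rows of $\rmW_1'$). Then I define $\rmW_1 \in \mathbb{R}^{(2d_1 + r + 2d_3)\times d}$ as the block matrix that sends
\[
\rvh=(\rvh_1,\rvh_2,\rvh_3)\mapsto (\rvh_1,\,-\rvh_1,\,\rmW_1'\rvh_2,\,\rvh_3,\,-\rvh_3),
\]
which is obviously achievable by placing $\pm\rmI_{d_1}$, $\rmW_1'$, and $\pm\rmI_{d_3}$ in the appropriate blocks and zeros elsewhere.

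Next, I apply \relu\ coordinate-wise to obtain
\[
\bigl(\relu(\rvh_1),\,\relu(-\rvh_1),\,\relu(\rmW_1'\rvh_2),\,\relu(\rvh_3),\,\relu(-\rvh_3)\bigr).
\]
I then define $\rmW_2 \in \mathbb{R}^{d\times (2d_1+r+2d_3)}$ so that it outputs, block by block, $\relu(\rvh_1)-\relu(-\rvh_1)$, then $\rmW_2'\relu(\rmW_1'\rvh_2)$, then $\relu(\rvh_3)-\relu(-\rvh_3)$. Concretely, the top block of $\rmW_2$ is $[\rmI_{d_1},\,-\rmI_{d_1},\,\bzero,\,\bzero,\,\bzero]$, the middle block is $[\bzero,\,\bzero,\,\rmW_2',\,\bzero,\,\bzero]$, and the bottom block is $[\bzero,\,\bzero,\,\bzero,\,\rmI_{d_3},\,-\rmI_{d_3}]$.

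Using $x=\relu(x)-\relu(-x)$ on each coordinate of $\rvh_1$ and $\rvh_3$, the output equals $(\rvh_1,\,\rmW_2'\relu(\rmW_1'\rvh_2),\,\rvh_3) = (\rvh_1,\,\mathrm{MLP}_s(\rvh_2),\,\rvh_3)$, which is exactly what is required. The main (and only) conceptual step is recognizing the positive/negative split; everything else is mechanical block assembly, so I would keep the write-up short and explicit about the three blocks of $\rmW_1$ and $\rmW_2$.
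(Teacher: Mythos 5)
Your construction is correct and is essentially the same as the paper's: both widen the hidden layer to carry $(\rvh_1,-\rvh_1,\rmW_1'\rvh_2,\rvh_3,-\rvh_3)$, apply \relu, and recombine. You in fact state the recombination more carefully than the paper does, using the correct identity $x=\relu(x)-\relu(-x)$ where the paper's write-up has a sign slip ($+$ instead of $-$).
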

\begin{proof}
    Obviously, for any $\rmW_1'$, there exists $\rmW_1$, such that $\rvh^{(a)}\triangleq \rvh\rmW_1=(\rvh_1,-\rvh_1,(\rmW_1' \rvh_2) ,\rvh_3 \trans,-\rvh_3 )$.
    
    Obviously, for any $\rmW_2'$, There exists $\rmW_2$, such that $\rvh^{(b)}=\rmW_2 \relu(\rvh^{(a)})=((\relu(\rvh_1)+\relu(-\rvh_1)) , (\rmW_2' \relu(\rmW_1')) , (\relu(\rvh_3)+\relu(-\rvh_3)))=(\rvh_1, MLP_s(\rvh_2), \rvh_3)$ 
\end{proof}

\subsection{Construction of Transformer}
 Without loss of generality, we assume the representation of the Transformer in layer $k$ is in a form that
$\rvh_{2i-1}^{(k)}=(\rvf_{i}, \bzero,\bzero,\bzero,\bzero,\bzero,\bzero,\rvp_{i}  )\trans$ and 
$\rvh_{2i}^{(k)}=(\bzero, \rvy_i,\bzero,\bzero,\bzero,\bzero,\bzero,\rvp_{i} )\trans$ (Remind that one sample will take two token, one for $x$ and one for $y$). Because the representation usually lies in low-dimension space, a simple linear layer can transfer the representation in our defined sparse form. Moreover, it is natural to assume that the position information is stored in the representation since it is given in the input and is essential for attention.

The consider the operations of Transformer in different layers.

     \textbf{1) ** Layer 1 **}

    Because we assume that $\rvh_{2i-1}^{(k)}$ is a perfect token representation, then there exists $\rmW_e$, such that $\rvh_{2i-1}^{(k)} \rmW_e=\rvf^{(e)}_k$, where $\rvf_i^{(e)}$ satisfies that $\forall e,i$, we have $\rvf_j^{(e)} \cdot \rvf_i^{(e)}=1$ only when $v_i^{(e)}=v_j^{(e)}$ else $\rvf_j^{(e)} \cdot \rvf_i^{(e)}=0$.

    \textbf{Step 1, we use each attention head to obtain the matching information of each factor.} 
    
    We first consider the query token at the position $2i-1$
    And we assign $\rmW_Q^{(l,k)}=\rmW_K^{(l,k)}=\rmW_{e_k}$ and  $\rmW_V^{(l,k)}=(\bzero,\bzero,\bzero, \bzero, \bzero, \bzero, \bzero, \rmI)\trans$ so that $(\rvh_i^{(l)})\trans \rmW_V^{(l,k)} = \rvp_i$. Then, we have

    \begin{equation}
        \rvb_{e_k}=\sum_{a=1}^{2i-2}( \rvh_i\trans \rmW_Q^{(l,k)} \cdot  \rvh_a\trans \rmW_K^{(l,k)} )  \rvh_i\trans \rmW_V^{(l,k)} = \sum_{a=1}^{i-1} \rvp_a (\rvv_a^{(e_k)}\cdot \rvv_i^{(e_k)}),
    \end{equation}
    where $\rvv_i^{(e)}$ is the one-hot vector of $v_i^{(e)}$.
    We denote $\text{base}=(2^0,2^1,\cdots,2^L)\trans$ and $\rvu_{2i-1}=(\{\text{base} \cdot \rvb_{e_1}, \cdots, \text{base} \cdot \rvb_{e_{n_e}}\}$.
    Obvious, there is $\rmW_O^{(l,k)}$ such that  $\sum_{k=1}^{n_e} \rvb_{e_k} \rmW_O^{(l,k)}=(\bzero,\bzero,\rvu_{2i-1},\bzero,\bzero,\bzero,\bzero,\bzero)$. 

    Then, we consider the token at position $2i$ as query token. We assign $\rmW_Q^{(l,n_e+1)}=\rmW_K^{(l,n_e+1)}=(\bzero,\rmI,\bzero,\bzero,\bzero,\bzero,\bzero,\bzero)\trans$ and $\rmW_V^{(l,n_e+1)}=(\bzero, \bzero, \bzero, \bzero,\bzero,\bzero, \bzero, \rmI)\trans$. We have

    \begin{equation}
        \rvb_{y}=\sum_{a=1}^{2i-1}( \rvh_i\trans \rmW_Q^{(l,n_e+1)} \cdot  \rvh_a\trans \rmW_K^{(l,n_e+1)} )  \rvh_i\trans \rmW_V^{(l,n_e+1)} = \sum_{a=1}^{i-1} \rvp_a(\rvy_a\cdot \rvy_i)
    \end{equation}
    Obvious, there is $\rmW_O$ such that  $ \rvb_{y} \rmW_O^{(l,n_e+1)}=(\bzero,\bzero,\bzero,\rvu_{2i},\bzero,\bzero,\bzero,\bzero,\bzero)$, where $\rvu_{2i}=\lbrace \operatorname{base}\cdot \rvb_y, \cdots, \operatorname{base}\cdot \rvb_y \rbrace$. 

    The $n_e+1$ head doesn't affect the token $2i-1$, because value of the  $\rvh_{2i-1}$ is $\bzero$. As a result, we have
    $\rvh_{2i-1}=\rvh_{2i-1}+\sum_{k=1}^{n_e} \rvb_{e_k} \rmW_O^{(l,k)}=(\rvf_i,\bzero,\rvu_{2i-1},\bzero,\bzero,\bzero,\bzero,\rvp_i)$ after the operation. Similarily, because the first $n_e$ head dosen't affect the value of $\rvh_{2i}$, we have $\rvh_{2i}=(\bzero,\rvy_i,\bzero,\rvu_{2i},\bzero,\bzero,\bzero,\rvp_i)$ after the operation. 
    
    Note that $\text{base} \cdot \rvb_{e_k}$ has the property that $\text{base} \cdot \rvb_{e_k}=\text{base} \cdot \rvb_{e_{k'}}$ if and only if $\rvb_{e_k}=\rvb_{e_k'}$. This result indicates that all the context examples that have same factor value of factor $e_k$ with the sample $i$ also has the same factor value of factor $e_{k'}$ as sample $i$.  \textbf{Therefore, we denote $\rvu$ as the matching information.} If $\text{base} \cdot \rvb_{e_k}=\text{base} \cdot \rvb_{y}$, we can infer that the factor value of $e_k$ has a similar pattern with the label. Therefore, $e_k$ is regard as the possible hidden factor.

    Step 2:  compare the $\rvu_{2i-1}$ and $\rvu_{2i}$ to infer possible hidden factor.

    For embedding of $\rvh_{2i}$, using the copy past of Lemma \ref{lm:copy_past}, we can obtain $\rvh_{2i}=(\bzero,\rvy_i,\bzero,\rvu_{2i},\rvu_{2i-1},\bzero,\bzero,\rvp_i)$. (By setting the copy position as $\rvp_i$ and therefore the operation will only influence $y$ token.)
    According to Lemma \ref{lm:sub_mlp}, there exists $\rmW_1^{(l)},\rmW_2^{(l)}$, such that we have $\rvh_{2i}=(\bzero,\rvy_i,\bzero,\rvu_{2i},\rvu_{2i-1},\rvm_{2i},\bzero,\rvp_i)$, where
    $\rvm_{2i}=\relu(\rvu_{2i}-\rvu_{2i-1})+\relu(\rvu_{2i-1}-\rvu_{2i})$. Recall that $\rvh_{2i-1}=(\rvf_i,\bzero,\rvu_{2i-1},\bzero,\bzero,\bzero,\bzero,\rvp_i)$. because all the corresponding terms of $\rvh_{2i-1}$ are $\bzero$, this operation won't impact the value of it. This copy past operation can be put into a same layer as the pervious operations is because in this operation we mainly copy the infomation from $2i-1$ token to $2i$-token. Because $2i-1$ precede $2i$, the operations of $2i-1$ is finished before the copy past operation happens.
    

    The value of $\rvm_{2i}$ has the property that the $k$-th position in $\rvm_{2i}$ is equal to $0$ if the values of $k$-th position of  $\rvu_{2i-1}$ and $\rvu_{2i}$ are equal. After this operation, we have $\rvh_{2i}=(\bzero,\rvy_i,\bzero,\rvu_{2i},\rvu_{2i-1},\rvm_{2i},\bzero,\rvp_i)$ and $\rvh_{2i-1}=(\rvf_i,\bzero,\rvu_{2i-1},\bzero,\bzero,\bzero,\bzero,\rvp_i)$.

     \textbf{2) ** Layer 2 **}

    \textbf{Blocking the information according to $\rvm$.}

    \textbf{First attention head:} for $\rvy$ token, at position $2i$, we apply Lemma \ref{lm:copy_past} to copy $\rvm_{2i-2}$ from $\rvh_{2i-2}$ to $\rvh_{2i}$.  Due to the weights sharing of attention, this yield a iterative effect. We denote $\rvm_{2i-1}'=  2\rvm_{2i-3}'+\rvm_{2i-2}$ and $\rvm_{2i}'=\rvm_{2i-1}+\rvm_{2i}$.
    Therefore, we have $\rvh_{2i}=(\bzero,\rvy_i,\bzero,\rvu_{2i},\rvu_{2i-1},\rvm_{2i}',\bzero,\rvp_i)$. Because of weights sharing, we have $\rvh_{2i-1}=(\rvf_i,\bzero,\rvu_{2i-1},\bzero,\bzero,\rvm_{2i-1}',\bzero,\rvp_i)$ .

    \textbf{Second attention head:} In this layer, for $\rvy$ token, at position $2i$, we apply Lemma \ref{lm:copy_past} to copy $\rvf_i$ from $\rvh_{2i-1}$ (This operation only affects $\rvy$ tokens). We have $\rvh_{2i}=(\rvf_i,\rvy_i,\bzero,\rvu_{2i},\rvu_{2i-1},\rvm_{2i}',\bzero,\rvp_i)$.
    
    \textbf{MLP Layer:} We denote $\rvf_{i,x}' \triangleq (\rvf_{i}^{(e_j)}-M \rvm_{2i-1}'[1],\cdots,\rvf_i^{(e_j)}-M \rvm_{2i-1}'[n_e])$ and $\rvf_{i,y}' \triangleq (\rvf_{i}^{(e_j)}-M \rvm_{2i}'[1],\cdots,\rvf_i^{(e_j)}-M \rvm_{2i}'[n_e])$. $M$ is a large constant value. In $\rvf_{i,x}'$, we will block the information of $j$-th factor if $\rvm_{2i-1}'[j]>0$. $\rvm_{2i-1}'[j]<0$ if and only if $\forall ~ k<i,~ \rvm_{2k}[j]=0 $. The same for $\rvf_{i,y}'$ 
    In MLP, we calculate $ \relu( \rvh_{2i-1} \trans \rmW_1^{(l+2)})\rmW_1^{(l+2)}=(\rvf_i,\rvf_i')\rmW_1^{(l+2)}=(\rvf_i'-\rvf_i,\bzero,\bzero,\bzero,\bzero,\bzero,\bzero, \bzero)$. Then, we have $\rvh_{2i-1}= \relu( \rvh_{2i-1} \trans \rmW_1^{(l+2)})\rmW_1^{(l+2)}+\rvh_{2i-1}=(\rvf_{i,x}',\bzero,\rvu_{2i-1},\bzero,\rvm_{2i-1}',\bzero,\bzero,\rvp_i)$. And similar, we have $\rvh_{2i}=(\rvf_{i,y}',\rvy_i,\bzero,\rvu_{2i},\rvu_{2i-1},\rvm_{2i}',\bzero,\rvp_i)$.

    \textbf{3) ** Layer 3 **}

     This layer obtains the logit of the new sample by comparing the similarity between the unblocked feature of this sample and the in-context sample.

    Setting $\rmW_Q^{(l+3,1)}=\rmW_K^{(l+3,1)}=(\rmI,\bzero,\bzero,\bzero,\bzero,\bzero,\bzero,\bzero)$, we have $\rvh_i\trans \rmW_Q^{(l+3,1)} = \rvh_i \trans \rmW_K^{(l+3,1)}= \rvf_i'$  . Setting $\rmW_V^{(l+3,1)}=(\bzero,\rmI,\bzero,\bzero,\bzero,\bzero,\bzero,\bzero)$ such that $\rvh_{2i} \trans \rmW_V^{(l+3,1)}=\rvy_i$ and $\rvh_{2i-1} \trans \rmW_V^{(l+3,1)}=0$.

    For position $2i-1$, we have
    \begin{equation}
        \text{Logit}=\sum_{a=1}^{2i-2}( \rvh_i\trans \rmW_Q^{(l+3,1)} \cdot  \rvh_a\trans \rmW_K^{(l+3,1)} )  \rvh_i\trans \rmW_V^{(l+3,1)} = \sum_{a=1}^{i-1} (\rvf_{i,x}' \cdot \rvf_{a,y}') \rvy_a'.
    \end{equation}
    
    Note that value $\rvf_{i,x}' \cdot \rvf_{a,y}'$ is equal to the number of unblocked factors (both unblocked) that have the same value between $a$-th sample and $i$-th sample
    Obviously, there is a $W_O^{(l+3,1)}$ such that 
    $\rvh_{2i-1}=(\rvf_{i,x}',\bzero,\rvu_{2i-1},\bzero,\bzero,\bzero,\rvm_{2i-1}',\text{Logit},\rvp_i)$.

    Finally, we output Logit using the prediction head.
    
    \subsection{Performance analysis}

    Here, we will analyze the sequence representation score and the in-context learning accuracy of our constructed model.

    \textbf{1) **context comp. score**}

    \paragraph{Propability for same factor value between  in-context examples and prompt sample} The probability for an in-context example having the same value of a factor as the prompt sample is $\frac{1}{n_v}$ and the probability of having different values is $\frac{n_v-1}{n_v}$. Therefore, given $i$ samples, the probability for $j$ samples have the same value of a factor as the prompt sample is $ \tbinom{i}{j} \frac{(n_v-1)^{i-j}}{n_v^{i}}$. 
    
    \paragraph{Probability for connot distinguish factors} Given $i$ in-context examples, we cannot distinguish $k$ factors to decide which one is the hidden factor if the $k$ factors satisfying that of $\forall~ e_1,e_2\in E_k,~ (x,y) \in s_c$, we have $v^{(e_1)}_x=v_p^{(e_2)}\Leftrightarrow v^{(e_2)}_x=v_p^{(e_2)}$, where $E_k$ is the set of these $k$ factors, $s_c$ is in-context examples, and $v$ is factor value.

    Given $i$ in-context examples, the probability for that we cannot distinguish $k$ factors is
    \begin{equation}
        \tbinom{|E|}{k} \sum_{j=0}^i \tbinom{i}{j}  \left( \frac{(n_v-1)^{i-j}}{n_v^{i}} \right)^k \left( 1-\frac{(n_v-1)^{i-j}}{n_v^{i}} \right)^{|E|-k}.
    \end{equation}
    
    \paragraph{context comp. score} When we cannot distinguish the hidden factor from $k$ factors, the probability of predicting wrong results is $\frac{k-1}{k}$. Combining the results above, we obtain the error:
    \begin{equation}
        \sum_{k=2}^{|E|} \tbinom{|E|}{k} \sum_{j=0}^{i} \tbinom{i}{j}   \frac{k-1}{k} \left( \frac{(n_v-1)^{i-j}}{n_v^{i}} \right)^k \left( 1-\frac{(n_v-1)^{i-j}}{n_v^{i}} \right)^{|E|-k}.
    \end{equation}
    The probability of giving the right prediction is 
    \begin{equation}
        s_i= 1-\sum_{j=0}^{i} \tbinom{i}{j}  \sum_{k=2}^{|E|} \tbinom{|E|}{k} \frac{k-1}{k} \left( \frac{(v-1)^{i-j}}{v^{i}} \right)^k \left( 1-\frac{(v-1)^{i-j}}{v^{i}} \right)^{|E|-k}.
    \end{equation}
    In the constructed Transformer, we will autoregressively combine the results of the previous prediction (Corresponding to Layer 2). We have:

    $$\srs_i= (1-\srs_{i-1})s_i+\srs_{i-1},$$
    where $\srs_0=s_0$.

    \textbf{2) **Accuracy**}

    The copy-past mechanism is used to predict the answer to the prompt example (Corresponding to layer 3). For the copy-past mechanism, having an in-context example with the same prediction result as the prompt example is necessary. When we correctly predict the hidden factor, the probability to predict correctly is $1-(\frac{n_v-1}{n_v})^i$. When we predict a wrong hidden factor, the probability is $\frac{1}{n_v}$. Combine the two above, we obtain the accuracy as follows
    \begin{equation}
        (1-(\frac{n_v-1}{n_v})^i)\srs_i+ \frac{1}{n_v}(1-\srs_i)=\cls_i = \frac{(n_v-1)(n_v^{i-1}-(n_v-1)^{i-1})}{n_v^i}\srs_i+\frac{1}{n_v}
    \end{equation}

\end{document}